\documentclass{opendatalab}

\usepackage[utf8]{inputenc}
\usepackage{xcolor}
\usepackage[most]{tcolorbox}
\usepackage{longtable}
\usepackage{listings}
\usepackage[numbers]{natbib}
\usepackage{enumitem}
\usepackage{graphicx}
\usepackage{xspace} 
\usepackage{hyperref}
\usepackage{multirow}
\usepackage{adjustbox}
\usepackage{algorithm}
\usepackage{algorithmic}
\usepackage{amsfonts}
\usepackage{subcaption}
\usepackage{wrapfig}
\usepackage{amsmath}
\usepackage{amsthm}

\newtheorem{theorem}{Theorem}
\newtheorem{remark}{Remark}

\newtheorem{lemma}{Lemma}
\newtheorem{proposition}{Proposition}

\definecolor{codegreen}{rgb}{0,0.6,0}
\definecolor{codegray}{rgb}{0.5,0.5,0.5}
\definecolor{codepurple}{rgb}{0.58,0,0.82}
\definecolor{backcolour}{rgb}{0.95,0.95,0.92}
\definecolor{promptcolor}{HTML}{D1D0F2}
\definecolor{promptcolorheader}{HTML}{bdbcec}

\definecolor{promptcolor}{HTML}{E3F0FA}
\definecolor{promptcolorheader}{HTML}{B5D6ED}
\definecolor{prompttitletext}{HTML}{1B3A5C}

\newtcolorbox{promptbox}[1][]{
	enhanced, breakable,
	top=0.3em,bottom=0.3em,left=0.5em,right=0.5em,
	toptitle=0.3em,bottomtitle=0.2em,boxsep=0pt,
	colframe=promptcolorheader, colback=promptcolor!50, boxrule=0.5pt,
	width=\columnwidth, 
	coltitle=prompttitletext,
	title={\footnotesize #1} 
}
\lstdefinestyle{promptstyle}{
    backgroundcolor=\color{backcolour},   
    commentstyle=\color{codegreen},
    keywordstyle=\color{magenta},
    numberstyle=\tiny\color{codegray},
    stringstyle=\color{codepurple},
    basicstyle=\ttfamily\footnotesize,
    breakatwhitespace=false,         
    breaklines=true,                 
    captionpos=b,                    
    keepspaces=true,                 
    numbers=left,                    
    numbersep=5pt,                  
    showspaces=false,                
    showstringspaces=false,
    showtabs=false,                  
    tabsize=2
}
\title{Offline Policy Optimization with Posterior Sampling}
\author[1]{Hongqiang Lin}
\author[2]{Dongxu Zhang}
\author[2]{Yiding Sun}
\author[1]{Mingzhe Li}
\author[3]{Ning Yang}
\author[4]{Haijun Zhang}

\affiliation[1]{Zhejiang University}
\affiliation[2]{Xi’an Jiaotong University}
\affiliation[3]{Institute of Automation, Chinese Academy of Sciences}
\affiliation[4]{University of Science and Technology Beijing}

\abstract{
A fundamental challenge in model-based offline reinforcement learning (RL) lies in the trade-off between generalization and robustness against exploitation errors in out-of-distribution (OOD) regions. While OOD samples may capture valid underlying physical dynamics, they also introduce the risk of model exploitation. Existing methods typically address this risk through excessive pessimistic regularization, which ensures robustness but often sacrifices generalization. To overcome this limitation, we propose Posterior Sampling-based Policy Optimization (PSPO), which formulates dynamics modeling as a Bayesian inference process to derive a posterior that explicitly quantifies model fidelity. Through the integration of posterior sampling and constrained policy optimization, our method leverages dynamics-consistent OOD transitions for generalization while ensuring robustness against model exploitation. Theoretically, we formulate Q-value estimation under posterior sampling as a stochastic approximation problem and establish its convergence. We decompose policy optimization into a sequence of constrained subproblems, demonstrating that solving these subproblems guarantees monotonic improvement until convergence. Experiments on standard benchmarks validate that PSPO achieves superior performance compared to state-of-the-art baselines.
}

\date{\today}
\correspondence{Ning Yang, \email{ning.yang@ia.ac.cn}
\\
\\
}

\begin{document}

\maketitle

% \tableofcontents

\section{Introduction}
Offline reinforcement learning (RL) \cite{Levine2020OfflineRL,moerland2023model} learns optimal policies from fixed datasets, avoiding the safety risks associated with online trial-and-error. The primary challenge in offline RL is distribution shift \cite{adaptive2023zhang,yang2025rtdiff}, where the learned policy deviates from the support of the behavior policy during policy iteration. This deviation exposes the policy to out-of-distribution (OOD) regions. Unconstrained policy iteration in these OOD regions tends to exploit approximation errors, causing erroneous action selection and performance collapse \cite{imagination2025liu}. Consequently, offline RL involves an inherent trade-off between behavioral regularization and value maximization \cite{park2025flow}.

In model-based offline RL, this trade-off manifests through the utilization of synthetic transitions. On the one hand, synthesizing dynamics-consistent OOD transitions promotes value maximization by facilitating generalization beyond the offline dataset support. On the other hand, it simultaneously introduces risks of model exploitation \cite{kidambi2020morel,lurevisiting,lin2025anystep}. Existing approaches typically mitigate this risk through pessimism, penalizing OOD value estimates based on epistemic uncertainty (e.g., the standard deviation of model predictions) to restrict the policy within the dataset support \cite{yu2020mopo,sun2023model,Qiao_Lyu_Jiao_Liu_Li_2025}. However, these pessimistic methods disproportionately favor behavioral regularization to ensure safety, inevitably sacrificing generalization. 

Thus, there exists a natural question to be addressed: \textit{Can we design a model-based offline RL algorithm that balances generalization and robustness against model exploitation in OOD regions?}

In this paper, we answer this question in the affirmative by proposing Posterior Sampling-based Policy Optimization (PSPO). Grounded in the Bayesian RL framework \cite{ghavamzadeh2015bayesian,ijcai2024p427}, PSPO treats dynamics model as a random variable rather than a single point estimate. This probabilistic formulation captures epistemic uncertainty through a posterior distribution over candidate models, allowing the policy to adaptively balance generalization and robustness against model exploitation. Our main contributions are summarized as follows:

\begin{wrapfigure}{r}{0.5\textwidth}
	\vspace{-25pt}
	\begin{minipage}{\linewidth}
		\begin{figure}[H]
			\centering
			\includegraphics[width=\textwidth]{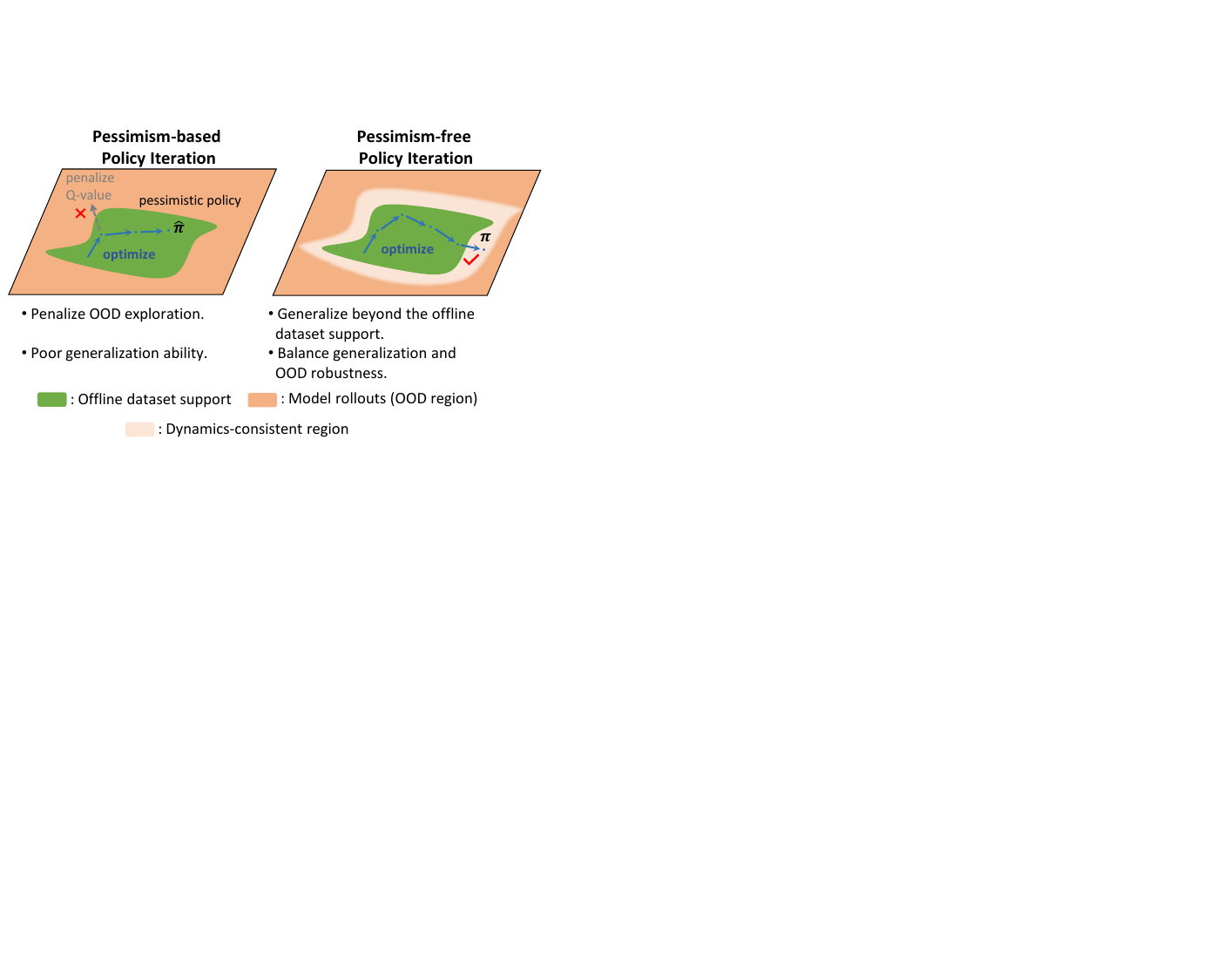}
			\caption{(Left) Existing pessimism-based policy iteration often results in overly pessimistic policies. (Right) We propose a pessimism-free paradigm designed to strike a balance between generalization to dynamics-consistent regions and robustness against model exploitation in OOD regions.}
			\label{fig1:intro}
		\end{figure}
	\end{minipage}
	\vspace{-20pt}
\end{wrapfigure}

\paragraph{Algorithm.}
PSPO formulates dynamics modeling as a Bayesian inference process. Unlike prior works that rely on fixed posterior assumptions, we derive the posterior $P(T|E)$ conditioned on observed transition evidence $E$, which explicitly quantifies the learned model's fidelity to the true dynamics. By performing model rollouts via posterior sampling, PSPO allows the policy to explore dynamics-consistent OOD regions. By evaluating the value of these OOD transitions, PSPO facilitates generalization beyond the offline dataset support. Building on this, we develop a constrained policy optimization algorithm that guarantees stable updates and ensures robustness against OOD exploitation. Notably, PSPO establishes a pessimism-free paradigm, distinguishing itself from prior model-based offline RL approaches (Figure~\ref{fig1:intro}).
\paragraph{Theoretical Analysis.}
For posterior sampling process, we present a posterior sampling-based Bellman operator and formulate Q-value estimation as a stochastic approximation problem. We establish that the variance of posterior sampling is bounded and prove the convergence of the estimation process through the Robbins-Monro theorem. For policy optimization, we define a posterior sampling-based Bellman optimality operator and decompose the optimization task into a sequence of constrained subproblems. We prove that solving these subproblems guarantees monotonic improvement with respect to the original objective function until convergence.
\paragraph{Experimental Results.}
Extensive evaluations across deterministic and stochastic benchmarks demonstrate that PSPO achieves state-of-the-art (SoTA) performance, consistent with our theoretical analysis. Finally, ablation studies verify the efficacy and indispensability of each key component within the PSPO algorithm.

\section{Preliminaries and Notations}
\paragraph{Bayes-Adaptive Markov Decision Process (BAMDP)} 
BAMDP is the classic framework for sequential decision-making under epistemic uncertainty \cite{ghosh2022offline}. A BAMDP is defined by the tuple $\mathcal{M}=(\mathcal S, \mathcal A, P, r, \rho_0, \gamma)$. Here, $\mathcal{S}$ and $\mathcal{A}$ denote the state and action spaces, respectively, $r:\mathcal S \times \mathcal A \to [-R_{\max},R_{\max}]$ is the bounded reward function, $\rho_0$ is the initial state distribution, and $\gamma$ is the discount factor. Crucially, $P$ represents a probabilistic belief over the unknown transition dynamics $T: \mathcal{S} \times \mathcal{A} \to \Delta(\mathcal{S})$, where $\Delta(\cdot)$ denotes the probability simplex.

\paragraph{Bayesian RL}
Standard RL aims to find an optimal policy $\pi:\mathcal{S}\to\Delta{(\mathcal{A})}$ that maximizes the expected cumulative discounted return:
$
J(\pi,T)=\mathop{\mathbb E}\limits_{\rho_0,\pi,T}\big[\sum_{t=0}^\infty\gamma^tr(s_t,a_t)\big],
$
where $T$ denotes the transition dynamics model. While standard offline approaches typically employ Maximum Likelihood Estimation (MLE) to fit a deterministic point estimate of the dynamics model, modern Bayesian RL treats $T$ as a random variable. Historically, Bayesian inference involves maintaining distributions over model weights. However, in high-dimensional environments, transition dynamics are typically modeled using deep neural networks, and the posterior over network weights resides on a complex manifold, rendering exact inference intractable. To bridge the gap between Bayesian theory and deep RL scalability, modern approaches \cite{chua2018deep,rigter2022rambo,guo2022model,rigter2023one,dong2024online} approximate the posterior over the dynamics model $T$, rather than over the network weights. Through Bayesian inference, we update the prior $P(T)$ with observed evidence $E$ to obtain the posterior $P(T|E)=\frac{P(T)P(E|T)}{P(E)}\propto P(T)P(E|T)$. Subsequently, Bayesian prediction computes the next-state probability by marginalizing over the model posterior: $P(s'|s,a)=\int_TT(s'|s,a)P(T|s,a)dT$.

\section{Formulation}
This section presents the formulation of the objective function, the construction of the posterior distribution $P(T|E)$, and the theoretical properties of the sampling process.
\subsection{Objective Function}

The optimization objective for PSPO is to maximize the expected return averaged over the posterior distribution of the model dynamics:
\begin{equation}
	\label{pre:eq4}
	\mathcal{J}(\pi)=\mathop {\mathbb E}_{\substack{\rho_0, \pi\\
			{T_0\sim P(T|\mathcal{D})}}} \left[\mathop {\mathbb E}\limits_{\substack{T_0,\pi \\ T_1\sim P(T|\mathcal{D})}}\left[\cdots \mathop {\mathbb E}_{T_{\infty}, \pi}\big[\sum_{t=0}^\infty\gamma^tr(s_t,a_t)\big]\right]\right].
\end{equation} 
This formulation explicitly integrates epistemic uncertainty into value estimation by treating the transition dynamics as random variables drawn from the posterior $P(T|E)$. 

Under the condition that a model is sampled from the posterior $P(T|E)$ at the beginning of each episode and held fixed for its duration, or if the posterior distribution is a point mass, then Eq.~\ref{pre:eq4} reduces to:
\begin{equation}
	\label{pre:eq5}
	\mathcal{J}(\pi)=\mathop{\mathbb{E}}_{\pi,T\sim P(T|E)}[J(\pi,T)].
\end{equation}

The policy obtained by maximizing Eq.~\ref{pre:eq5} with respect to $\pi$ is known as the Bayes-optimal policy \cite{ghosh2022offline}. In this paper, we focus on the generalized objective of Bayesian RL defined by Eq.~\ref{pre:eq4}, with the proposed algorithm retaining full applicability to the objective in Eq.~\ref{pre:eq5}.
\subsection{Posterior Distribution $P(T|E)$}
The posterior distribution inherently defines the principle by which we utilize the learned models \cite{guo2022model}. In the offline setting, the design of this posterior distribution must account for two primary principles:
\begin{enumerate}
	\item Adaptability. The posterior must be adaptable, such that the probability of selecting each dynamics model can be updated through a likelihood function upon observing transition evidence $E = (s, a, r, s')$.
	\item Generalization. The posterior should encourage exploration by maintaining non-negligible probability mass near the support of the offline dataset.
\end{enumerate}

To satisfy Principle 1, we define an empirical consistency metric as a function of the transition dynamics model $T$:
\begin{equation}
	\label{reweight_metric}
	\mathcal{F}(T)=\big|Q(s,a)-(r+\gamma\mathop{\mathbb{E}}_{s'\sim T}[V(s')])\big|^2.
\end{equation}

Upon observing a new transition $(s,a,r,s')$, we compute the metric $\mathcal{F}(T_i)$ for each learned dynamics model $T_i$. We hypothesize that models with smaller $\mathcal{F}(T_i)$ should be assigned greater weight. To identify the optimal weighting among infinite candidates, we invoke the Principle of Minimum Information, which selects the posterior belief that is closest to the prior in terms of information distance while satisfying the constraints imposed by the new evidence. This leads to the following optimization problem:
\begin{equation}
	\label{model:eq1}
	\min_{P(T|E)}D_\mathrm{KL}(P(T|E)||P(T))+\beta\mathop{\mathbb{E}}_{T\sim P(T|E)}[\mathcal{F}(T)],
\end{equation}
where $\beta>0$ is hyperparameter. Solving Eq.~\ref{model:eq1} leads to the posterior distribution: $P(T|E)\propto P(T)\cdot\exp(-\beta\cdot\mathcal{F}(T))$. This solution interprets $P(E|T) \propto \exp(-\beta\cdot\mathcal{F}(T))$ as a generalized likelihood, aligning with the variational framework of generalized Bayesian inference \cite{bissiri2016general}. It is noteworthy that our belief update framework theoretically guarantees exploration (i.e., Principle 2) through Eq.~\ref{model:eq1}. The expectation term $\mathop{\mathbb{E}}_{T\sim P(T|E)}[\mathcal{F}(T)]$ anchors the posterior to models consistent with the offline dataset support. Conversely, the regularization term $D_{\mathrm{KL}}(P(T|E)||P(T))$ enforces exploration by acting as an entropy regularizer that prevents the posterior from collapsing onto a single model. Consequently, the framework automatically satisfies our exploration principle without requiring manual intervention.

\subsection{Theoretical Property of Sampling Process}
\label{sec:model}
In the Bayesian setting, we define the posterior sampling-based Bellman operator $\bar{\mathcal{B}}^\pi$ to evaluate objective function $\mathcal{J}(\pi)$ (Eq.~\ref{pre:eq4}):
\begin{equation}
	\label{model:operator}
	(\bar{\mathcal{B}}^\pi Q)(s, a) = r(s, a) + \gamma \mathop{\mathbb{E}}_{{T\sim P(T|E), s' \sim T,a' \sim \pi}} \left[ Q(s', a')\right].
\end{equation}
\begin{proposition}
	\label{proposition1}
	The posterior sampling-based Bellman operator $\bar{\mathcal{B}}^\pi$ is a $\gamma$-contraction with respect to the $L_\infty$-norm.
\end{proposition}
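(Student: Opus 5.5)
The plan is the standard contraction argument for policy-evaluation Bellman operators. The only twist is that the expectation now also ranges over the posterior $P(T|E)$. The key observation is that, for fixed $(s,a)$, the composite sampling procedure $T\sim P(T|E)$, $s'\sim T(\cdot|s,a)$, $a'\sim\pi(\cdot|s')$ defines a single probability measure $\mu_{s,a}$ on $\mathcal S\times\mathcal A$. This measure is obtained by marginalizing the dynamics over the posterior, as in the Bayesian prediction formula $P(s'|s,a)=\int_T T(s'|s,a)P(T|E)\,dT$, and then composing with $\pi$. Crucially, $\mu_{s,a}$ does not depend on $Q$: the posterior is treated as fixed when the operator is applied. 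So $\bar{\mathcal B}^\pi$ can be written as
\begin{equation*}
(\bar{\mathcal B}^\pi Q)(s,a)=r(s,a)+\gamma\int Q\,d\mu_{s,a},
\end{equation*}
an affine map in $Q$ whose linear part is a Markov (averaging) operator.

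The key steps, in order, are as follows.

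First, take two bounded functions $Q_1,Q_2$. Subtract the two operator outputs at an arbitrary $(s,a)$. The reward terms cancel, leaving
\begin{equation*}
\bigl|(\bar{\mathcal B}^\pi Q_1)(s,a)-(\bar{\mathcal B}^\pi Q_2)(s,a)\bigr| = \gamma\Bigl|\mathbb E_{T\sim P(T|E),\,s'\sim T,\,a'\sim\pi}\bigl[Q_1(s',a')-Q_2(s',a')\bigr]\Bigr|.
\end{equation*}

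Second, move the absolute value inside the expectation using Jensen's inequality (or the triangle inequality for integrals). Then bound the integrand pointwise by $\|Q_1-Q_2\|_\infty$. Since the nested expectations are taken with respect to probability measures (posterior, transition kernel, policy), each integrates to one. The bound $\gamma\|Q_1-Q_2\|_\infty$ therefore passes through all three layers, e.g. by iterating the expectation from the inside out via Fubini/tower property.

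Third, take the supremum over $(s,a)$ to conclude $\|\bar{\mathcal B}^\pi Q_1-\bar{\mathcal B}^\pi Q_2\|_\infty\le\gamma\|Q_1-Q_2\|_\infty$. Boundedness of $r$ by $R_{\max}$ guarantees that $\bar{\mathcal B}^\pi$ maps bounded functions to bounded functions. The space of bounded $Q$-functions with the sup norm is complete, so Banach's fixed-point theorem additionally yields a unique fixed point, which will be useful later.

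The main obstacle is not the inequality itself but justifying that the posterior is a legitimate, $Q$-independent probability measure. Eq.~\ref{reweight_metric} defines $\mathcal F(T)$ through $Q$ and $V$, so $P(T|E)\propto P(T)\exp(-\beta\mathcal F(T))$ could in principle depend on the $Q$ being evaluated. If it did, the operator would not be affine and the cancellation above would fail. I would handle this by stating explicitly that, for the purposes of the operator, the posterior is the one computed from the current evidence and critic and is held fixed while $\bar{\mathcal B}^\pi$ is applied (the same convention as in the stochastic-approximation analysis that follows). I would also check that $\exp(-\beta\mathcal F)\in(0,1]$ ensures the normalizer is finite and positive, so $P(T|E)$ is well defined. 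Measurability needed for Fubini is assumed, as is standard.
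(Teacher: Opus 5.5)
Your proof is correct and follows essentially the same route as the paper's: the rewards cancel, the absolute value moves inside the nested expectation over $T\sim P(T|E)$, $s'\sim T$, $a'\sim\pi$, the integrand is bounded by $\|Q_1-Q_2\|_\infty$, and you take the supremum over $(s,a)$. Your caveat that $P(T|E)$ must be held fixed (independent of the $Q$ being evaluated, even though $\mathcal{F}(T)$ is defined through the critic) is a point the paper never states but implicitly relies on, so making it explicit is a genuine improvement.
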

Although $\bar{\mathcal{B}}^\pi$ is theoretically a $\gamma$-contraction mapping, directly performing value estimation with this operator is computationally intractable due to the expectation over the continuous (or large finite) space of dynamics models. To address this, our algorithm employs a stochastic approximation approach, estimating the update target by sampling models from the posterior $P(T|E)$. While the contraction property ensures the existence of a unique fixed point, the stochastic nature of the updates introduces sampling noise. Consequently, convergence relies on the Robbins-Monro theorem \cite{robbins1951stchastioc}, which requires not only a contracting expectation but also a bounded variance of the stochastic estimate. The following theorem establishes this crucial property, proving that the variance of our sampled target is bounded. The complete proof is detailed in Appendix~\ref{appendix:proof}.

\begin{proposition}
	\label{thm:var}
	Assuming the reward is bounded such that $|r| \le R_{\max}$, and 
	let $Y_t(s,a)$ be the stochastic target computed at update $t$: $Y_t = r(s, a) + \gamma \mathbb{E}_{s' \sim T'(\cdot|s,a)} [V_t(s')]$ where $T' \sim P(T|E)$ and $V_t(s') = \mathbb{E}_{a' \sim \pi_t}[Q_t(s', a')]$. Then, the variance of $Y_t$ satisfies  $\mathrm{Var}(Y_t) \le \frac{R^2_{\max}}{(1-\gamma)^2}$.
\end{proposition}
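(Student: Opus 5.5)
The plan is to use the fact that, once $r(s,a)$ and $V_t$ are fixed, the only randomness in $Y_t$ comes from $T' \sim P(T|E)$. So $\mathrm{Var}(Y_t) = \gamma^2 \mathrm{Var}_{T'}\big(g(T')\big)$ with $g(T') := \mathbb{E}_{s'\sim T'(\cdot|s,a)}[V_t(s')]$, since the additive constant $r(s,a)$ drops out. It therefore suffices to bound the spread of $g(T')$. I will use Popoviciu's inequality: a random variable supported in an interval $[m,M]$ has variance at most $(M-m)^2/4$.

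The main step, and the one I expect to be the real obstacle, is a uniform bound on $V_t$. I will assume, as is standard, that $Q_t$ is initialized within $[-R_{\max}/(1-\gamma), R_{\max}/(1-\gamma)]$, for example $Q_0 \equiv 0$, and argue by induction on $t$ that it stays there. Suppose $\|Q_t\|_\infty \le R_{\max}/(1-\gamma)$. Then $\|V_t\|_\infty \le R_{\max}/(1-\gamma)$, because $V_t$ is an average of $Q_t$ under $\pi_t$. Every sampled target then satisfies
\begin{equation*}
|Y_t| \le R_{\max} + \gamma \frac{R_{\max}}{1-\gamma} = \frac{R_{\max}}{1-\gamma}.
\end{equation*}
The Robbins--Monro update $Q_{t+1} = (1-\alpha_t)Q_t + \alpha_t Y_t$ with $\alpha_t\in[0,1]$ is a convex combination, so it preserves the bound. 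This is the same invariance that underlies the $\gamma$-contraction of Proposition~\ref{proposition1}: the ball of radius $R_{\max}/(1-\gamma)$ is mapped into itself by $\bar{\mathcal{B}}^\pi$. If the algorithm uses a function approximator, I would state this boundedness as an explicit assumption, or enforce it by clipping.

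Given that bound, $g(T')\in[-R_{\max}/(1-\gamma), R_{\max}/(1-\gamma)]$ for every $T'$ in the support of the posterior, since it is an expectation of $V_t$. Popoviciu's inequality then gives
\begin{equation*}
\mathrm{Var}(g(T')) \le \frac{R_{\max}^2}{(1-\gamma)^2},
\end{equation*}
so
\begin{equation*}
\mathrm{Var}(Y_t) \le \gamma^2\frac{R_{\max}^2}{(1-\gamma)^2} \le \frac{R_{\max}^2}{(1-\gamma)^2}.
\end{equation*}
The simpler estimate $\mathrm{Var}(Y_t)\le \mathbb{E}[Y_t^2]\le \sup|Y_t|^2 = R_{\max}^2/(1-\gamma)^2$ also works directly, and I would present whichever is cleaner, noting that the $\gamma^2$ factor gives slack. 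The bound is uniform in $t$, $(s,a)$, and the posterior $P(T|E)$, which is exactly what the Robbins--Monro convergence argument needs.
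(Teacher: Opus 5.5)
Your proposal is correct and follows essentially the same route as the paper. The paper takes $|Q_t|,|V_t|\le R_{\max}/(1-\gamma)$ as a standing assumption, deduces $Y_t\in[-R_{\max}/(1-\gamma),\,R_{\max}/(1-\gamma)]$, and applies Popoviciu's inequality. Your additions are sound refinements the paper does not make: you derive the bound on $Q_t$ by induction through the convex-combination update (valid for a bounded initialization and step sizes in $[0,1]$), and you observe that the deterministic $r(s,a)$ drops out, giving the sharper constant $\gamma^2 R_{\max}^2/(1-\gamma)^2$.
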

\begin{remark}[Convergence of Q-value Estimation]
	\label{remark1}
	The estimation of Q-values utilizes a stochastic approximation process, with the update rule defined as follows:
	\begin{equation}
		\label{thm:policy_eval}
		Q_{t+1}(s,a) = Q_t(s,a) + \eta_t \left( Y_t(s,a) - Q_t(s,a) \right).
	\end{equation}
	
	Given the bounded variance of $Y_t(s,a)$ (Theorem~\ref{thm:var}) and the $\gamma$-contraction property of the operator $\bar{\mathcal{B}}^\pi$ (Proposition~\ref{proposition1}), the Robbins-Monro theorem guarantees that the evaluation iterates converge to $Q^*$ under the standard learning rate schedule (i.e., $\sum \eta_t = \infty$ and $\sum \eta_t^2 < \infty$).
\end{remark}

Our approach estimates $\mathcal{J}(\pi)$ by sampling dynamics models from the posterior derived in Eq.~\ref{model:eq1}, a pessimism-free mechanism that ensures convergence to $Q^*$. Beyond convergence, posterior sampling is critical for generalization. We now clarify this sampling process by discussing its theoretical connections to established exploration techniques.

%By sampling from the posterior (Eq.~\eqref{model:eq1}), we evaluate $\mathcal{J}(\pi)$ within a stochastic approximation framework. This guarantees convergence to $Q^*$ via the Robbins-Monro theorem (Theorem~\ref{thm:var}), notably without introducing any pessimism.

\begin{remark}[Connection to Thompson Sampling]
	PSPO implements a variant of Thompson Sampling by drawing dynamics models from the posterior $P(T|E)$ to synthesize transitions. Unlike standard ensemble methods that often average predictions, our approach preserves the stochasticity inherent in the posterior variance to  translate epistemic uncertainty into controlled exploration. Furthermore, by modulating the posterior using empirical consistency ($\mathcal{F}(T)$, Eq.~\ref{reweight_metric}), the algorithm adaptively concentrates probability mass on high-fidelity models. This enables PSPO to selectively leverage dynamics-consistent OOD transitions for effective generalization beyond the offline dataset support, while suppressing hallucinations arising from poorly fitted models.
\end{remark}
\begin{algorithm}[t!]
	\caption{Posterior Sampling-based Policy Optimization} 
	\label{algo} 
	\begin{algorithmic}[1]
		\REQUIRE Offline dataset $\mathcal{D}$, model ensemble size $N$, iteration steps $I$.
		\STATE \textbf{Initialization:} Randomly initialize Q-function $Q_{\theta}(s,a)$ and policy $\pi_{\phi}(a|s)$. Initialize the target Q-function $Q_{\theta'}(s,a)$ by setting its parameters $\theta'$ to the current Q-network parameters $\theta$, and initialize the target policy $\pi_{\phi'}(a|s)$ by setting $\phi' \gets \phi$. Randomly initialize $N$ dynamics models $\{ T_{\psi_i}(s'|s,a) \}_{i=1}^N$. 
		\STATE \textbf{Dynamics model training:} Train each dynamics model $T_{\psi_i}(s'|s,a)$ to maximize: $$\mathop{\mathbb E}\limits_{(s_t,a_t,r_{t+1},s_{t+1})\sim \mathcal{D}}[\log T_{\psi_i}(s_{t+1},r_{t+1}|s_t,a_t)].$$
		\FOR{$i=1,2,\cdots,I$}
		\STATE \textbf{Constructing posterior belief $P(T|E)$:} The posterior belief $P(T|E)$ over the dynamics model is obtained by solving Eq.~\ref{model:eq1}.
		\STATE \textbf{Policy evaluation:} Execute the evaluation step according to Eq.~\ref{opt:eq2} to update $Q_{\theta}$.
		\STATE \textbf{Policy improvement:} Execute the improvement step according to Eq.~\ref{opt:eq3} to update policy $\pi_{\phi}(a|s)$.
		\STATE \textbf{Moving average:} Update target Q-function $Q_{\theta'}$ and reference policy $\pi_{\phi'}$.
		\ENDFOR
		\RETURN $\pi_{\phi}$.
	\end{algorithmic}
\end{algorithm}
\section{Posterior Sampling-based Policy Optimization (PSPO)}
In this section, we address the optimization of $\mathcal{J}(\pi)$ (Eq.~\ref{pre:eq4}) through policy iteration. We then present the complete PSPO algorithm and its implementation.
\subsection{Iterative Regularized Policy Optimization}
\label{sec:policy}
Optimizing the objective function $\mathcal{J}(\pi)$ (Eq.~\ref{pre:eq4}) without resorting to pessimism (e.g., uncertainty penalization of the Q-function) requires addressing two key challenges:
\begin{enumerate}
	\item Regulate OOD exploration to maintain adherence to the offline dataset support, effectively mitigating distribution shift while permitting safe exploration.
	\item Alleviate optimization instability stemming from the non-stationary state visitation distribution induced by time-varying dynamics and evolving policies.
\end{enumerate} 

To mitigate the first challenge, we augment the objective function with a regularization term that prevents the policy from excessively sampling OOD actions. Letting $\mu$ denote the reference policy and $\alpha > 0$ denote the regularization strength, we consider the following regularized objective:
\begin{equation}
	\label{opt:eq1}
	\widetilde{\mathcal{J}}(\pi)=\mathop {\mathbb E}_{\substack{\rho_0, \pi\\
			{T_0\sim P(T|\mathcal{D})}}} \left[\mathop {\mathbb E}\limits_{\substack{T_0,\pi \\ T_1\sim P(T|\mathcal{D})}}\left[\cdots \mathop {\mathbb E}_{T_{\infty}, \pi}\big[\sum_{t=0}^\infty\gamma^t\big(r(s_t,a_t)-\alpha D_\mathrm{KL}(\pi(\cdot|s_t)||\mu(\cdot|s_t))\big)\big]\right]\right],
\end{equation}
where $D_{\mathrm{KL}}(\pi||\mu)$ serves as a soft constraint, explicitly modulating the trade-off between reward maximization and behavior regularization. 

To evaluate $\widetilde{\mathcal{J}}(\pi)$, we propose the following posterior sampling-based Bellman optimality operator:
\begin{equation}
	\label{opt:eq2}
	(\bar{\mathcal{B}}^*Q)(s,a)=r(s,a) + \gamma \mathop{\mathbb{E}}_{{T\sim P(T|E), s'\sim T}}\bigg[\alpha\log\mathbb{E}_\mu\exp\big(\frac{Q(s',a')}{\alpha}\big)\bigg].
\end{equation}

Formally, we have the following theorem:
\begin{theorem}
	\label{thm2}
	Starting from any function $Q: \mathcal{S} \times \mathcal{A} \to \mathbb{R}$ and iteratively applying posterior sampling-based Bellman optimality operator $\bar{\mathcal{B}}^*$, the resulting sequence converges to $\bar{Q}^*$. The optimal policy for the objective $\widetilde{\mathcal{J}}(\pi)$ is then obtained from this fixed point as:$\pi^*(a|s) \propto \mu(a|s) \exp \left( \frac{1}{\alpha} \bar{Q}^*(s, a) \right).$
\end{theorem}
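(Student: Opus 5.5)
The plan has two parts: first show that $\bar{\mathcal{B}}^*$ is a $\gamma$-contraction in the $L_\infty$-norm, then identify its fixed point with the optimal soft value of the regularized objective $\widetilde{\mathcal{J}}$. The policy formula then comes from the Gibbs variational principle.

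\textbf{Contraction.} Write $\mathcal{L}_\mu Q(s') = \alpha\log\mathbb{E}_{a'\sim\mu}\exp(Q(s',a')/\alpha)$. This log-sum-exp operator is monotone and translation-equivariant, meaning $\mathcal{L}_\mu(Q+c)=\mathcal{L}_\mu Q + c$ for any constant $c$. Take two functions $Q_1,Q_2$ and set $\epsilon=\|Q_1-Q_2\|_\infty$. Then $Q_1\le Q_2+\epsilon$ pointwise, so $\mathcal{L}_\mu Q_1\le \mathcal{L}_\mu Q_2+\epsilon$, and by symmetry $|\mathcal{L}_\mu Q_1(s')-\mathcal{L}_\mu Q_2(s')|\le\epsilon$ for every $s'$. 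Since the outer expectation over $T\sim P(T|E)$ and $s'\sim T$ is an averaging operator, it cannot increase sup-norm differences. Hence
\begin{equation*}
\|\bar{\mathcal{B}}^*Q_1-\bar{\mathcal{B}}^*Q_2\|_\infty\le\gamma\epsilon .
\end{equation*}
The argument follows the same pattern as Proposition~\ref{proposition1}. Banach's fixed-point theorem on bounded functions then gives a unique fixed point $\bar Q^*$, and iterating $\bar{\mathcal{B}}^*$ from any bounded $Q$ converges to it geometrically. Arbitrary unbounded starting functions are not covered by this argument. I would assume boundedness and note that $|r|\le R_{\max}$ keeps the iterates bounded.

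\textbf{Identification with $\widetilde{\mathcal{J}}$.} The key step is the Donsker--Varadhan / Gibbs variational identity:
\begin{equation*}
\alpha\log\mathbb{E}_{\mu}\exp\big(Q(s',\cdot)/\alpha\big)=\max_{\pi(\cdot|s')}\Big\{\mathbb{E}_{a'\sim\pi}[Q(s',a')]-\alpha D_{\mathrm{KL}}(\pi(\cdot|s')\|\mu(\cdot|s'))\Big\}.
\end{equation*}
The maximum is attained uniquely at $\pi(a'|s')\propto\mu(a'|s')\exp(Q(s',a')/\alpha)$. To prove it, I would write the right-hand side as $-\alpha D_{\mathrm{KL}}(\pi\|\pi_Q)$ plus the log-partition term, where $\pi_Q$ is the Gibbs policy, and use nonnegativity of the KL divergence.

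With this identity, $\bar{\mathcal{B}}^*Q=\max_\pi \bar{\mathcal{B}}^\pi_{\alpha}Q$, where $\bar{\mathcal{B}}^\pi_\alpha$ is the KL-regularized version of the operator in Eq.~\ref{model:operator}. Each $\bar{\mathcal{B}}^\pi_\alpha$ is also a $\gamma$-contraction, and its fixed point $\bar Q^\pi$ is the soft Q-value of $\pi$ under the nested posterior-sampled expectation in Eq.~\ref{opt:eq1}. Because the dynamics are resampled independently at every step, the nested expectation unrolls one step at a time as a Bellman recursion with the averaged kernel $\bar T(s'|s,a)=\int T(s'|s,a)P(T|E)\,dT$. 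This is the point where the structure of Eq.~\ref{opt:eq1} is used.

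\textbf{Optimality and the policy formula.} I would use a standard monotonicity argument. Since $\bar{\mathcal{B}}^\pi_\alpha\bar Q^*\le\bar{\mathcal{B}}^*\bar Q^*=\bar Q^*$, iterating $\bar{\mathcal{B}}^\pi_\alpha$ gives $\bar Q^\pi\le\bar Q^*$ for every $\pi$. Equality holds for $\pi^*\propto\mu\exp(\bar Q^*/\alpha)$, because for this policy $\bar{\mathcal{B}}^{\pi^*}_\alpha\bar Q^*=\bar Q^*$, so $\bar Q^*$ is its unique fixed point. Finally, $\widetilde{\mathcal{J}}(\pi)=\mathbb{E}_{\rho_0}\big[\mathbb{E}_{\pi}\bar Q^\pi-\alpha D_{\mathrm{KL}}(\pi\|\mu)\big]$, which is maximized by $\pi^*$.

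\textbf{Main obstacle.} I expect the hardest step to be the rigorous identification of $\bar Q^\pi$ with the nested infinite expectation in Eq.~\ref{opt:eq1}. I would handle it by showing that the $n$-step truncation equals $(\bar{\mathcal{B}}^\pi_\alpha)^n 0$ by induction on $n$, then passing to the limit using $\gamma<1$ and the bounded rewards.
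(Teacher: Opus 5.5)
Your proposal is correct. The contraction half matches the paper's proof. The paper proves exactly your monotonicity/translation bound for $V_Q(s)=\alpha\log\mathbb{E}_{a'\sim\mu}\exp(Q(s,a')/\alpha)$ as Lemma~\ref{lemma1}, averages over $T\sim P(T|E)$, $s'\sim T$, and applies Banach's theorem on the space of bounded functions. So your caveat that ``any function'' should read ``any bounded function'' applies to the paper's statement as well.

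The second half is where you genuinely go beyond the paper. The paper notes only that $V_Q(s)$ is the value of the per-state problem $\max_{\pi}\{\mathbb{E}_{\pi}Q-\alpha D_{\mathrm{KL}}(\pi\|\mu)\}$. It then solves the Lagrangian stationarity condition at $Q=\bar Q^*$ to get the Gibbs form, and calls this policy optimal for $\widetilde{\mathcal{J}}$ ``by definition.'' It never connects $\bar Q^*$ to the nested objective in Eq.~\ref{opt:eq1}. Your argument supplies that connection through four pieces:
\begin{itemize}
\item the policy-wise operators $\bar{\mathcal{B}}^\pi_\alpha$, with $\bar{\mathcal{B}}^*Q=\max_\pi\bar{\mathcal{B}}^\pi_\alpha Q$;
\item the reduction of per-step independent resampling to an ordinary MDP with averaged kernel $\bar T$;
\item the comparison $\bar Q^\pi\le\bar Q^*$, with equality at the Gibbs policy;
\item the decomposition $\widetilde{\mathcal{J}}(\pi)=\mathbb{E}_{\rho_0}[\mathbb{E}_\pi\bar Q^\pi-\alpha D_{\mathrm{KL}}(\pi\|\mu)]$.
\end{itemize}
Your KL-nonnegativity proof of the Gibbs identity also gives the maximizer and its uniqueness directly, replacing the Lagrange-multiplier computation.

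The paper's route is shorter, but yours is what actually proves the theorem's second sentence. It also shows that the per-step resampling in Eq.~\ref{opt:eq1} is essential. If one model were drawn per episode (the setting of Eq.~\ref{pre:eq5}), the averaged-kernel fixed point would in general not be optimal. One assumption you share with the paper should be stated explicitly: $P(T|E)$ must be held fixed while iterating $\bar{\mathcal{B}}^*$. Through $\mathcal{F}$ in Eq.~\ref{reweight_metric} the posterior depends on the critic, and a $Q$-dependent averaging measure would break the contraction estimate.
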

To address the second challenge, we limit the magnitude of policy updates. Let $\pi_i$ denote the policy obtained at the $i$-th iteration. We consider the following optimization problem:
\begin{equation}
	\label{opt:eq3}
	\pi_{i+1}=\arg\max_\pi\widetilde{\mathcal{J}}(\pi),\quad
	\textit{s.t.}\quad D_\mathrm{KL}(\pi||\pi_i)\le\epsilon.
\end{equation}

Since Eq.~\ref{opt:eq3} optimizes the regularized objective function $\widetilde{\mathcal{J}}(\pi)$, a natural question arises: Does solving Eq.~\ref{opt:eq3} guarantee improvement of the original objective function $\mathcal{J}(\pi)$? The following theorem addresses this question.
\begin{theorem}
	\label{thm3}
	Let $C_{\mathrm{KL}}(\pi)=\mathcal{J}(\pi) - \widetilde{\mathcal{J}}(\pi)$ denote the regularization term, and let $F=F(\pi_i)$ be the Fisher Information Matrix at $\pi_i$. If $\epsilon$ is sufficiently small and the condition $\Vert\nabla\mathcal{J}(\pi_i)\Vert^2_F > \langle  \nabla \mathcal{J}(\pi_i),\nabla C_\mathrm{KL}(\pi_i) \rangle_F$ holds, then starting from an arbitrary policy $\pi_0$, the sequence of policies $\{\pi_i\}_{i\ge0}$ generated by iteratively solving Eq.~\ref{opt:eq3} guarantees monotonic improvement of $\mathcal{J}(\pi)$, such that $\mathcal{J}(\pi_{i+1}) \ge \mathcal{J}(\pi_i)$.
\end{theorem}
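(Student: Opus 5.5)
We argue in the standard trust-region way: replace the subproblem Eq.~\ref{opt:eq3} by its second-order local approximation, solve that in closed form, and then show that the resulting natural-gradient step increases $\mathcal{J}$. Throughout, policies are identified with parameters $\theta$ in a smooth parametrization, and $\mathcal{J}$, $C_{\mathrm{KL}}$ are treated as twice continuously differentiable in $\theta$. Write $\widetilde{\mathcal{J}} = \mathcal{J} - C_{\mathrm{KL}}$ by definition.

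\textbf{Step 1 (local approximation of the subproblem).} For small $\epsilon$, expand the objective to first order and the constraint to second order around $\pi_i$. This gives
\begin{equation*}
\widetilde{\mathcal{J}}(\pi_i+\delta)\approx \widetilde{\mathcal{J}}(\pi_i)+g^\top\delta,\qquad g=\nabla\mathcal{J}(\pi_i)-\nabla C_{\mathrm{KL}}(\pi_i),
\end{equation*}
together with $D_{\mathrm{KL}}(\pi_i+\delta\,\|\,\pi_i)\approx \tfrac12\delta^\top F\delta$. The constant and linear terms of the KL expansion vanish at $\delta=0$, so the Fisher matrix is the Hessian of the constraint.

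\textbf{Step 2 (closed-form step).} The Lagrangian/KKT conditions of the quadratic-constrained linear problem give
\begin{equation*}
\delta^*=\sqrt{\tfrac{2\epsilon}{g^\top F^{-1}g}}\,F^{-1}g,
\end{equation*}
assuming $F$ is positive definite and $g\neq 0$. If $g=0$, then $\pi_i$ is stationary for the subproblem. In that case $\pi_{i+1}=\pi_i$ and the inequality holds trivially, which is the "until convergence" case.

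\textbf{Step 3 (first-order change in $\mathcal{J}$).} Taylor-expand the original objective:
\begin{equation*}
\mathcal{J}(\pi_{i+1})-\mathcal{J}(\pi_i)=\nabla\mathcal{J}(\pi_i)^\top\delta^*+O(\|\delta^*\|^2).
\end{equation*}
Substituting $\delta^*$, the leading term is a positive multiple of
\begin{equation*}
\nabla\mathcal{J}^\top F^{-1}\nabla\mathcal{J}-\nabla\mathcal{J}^\top F^{-1}\nabla C_{\mathrm{KL}}
=\|\nabla\mathcal{J}(\pi_i)\|_F^2-\langle\nabla\mathcal{J}(\pi_i),\nabla C_{\mathrm{KL}}(\pi_i)\rangle_F .
\end{equation*}
Here we interpret $\langle u,v\rangle_F=u^\top F^{-1}v$, the natural-gradient metric, so the theorem's notation matches. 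By hypothesis this quantity is strictly positive. Since $\|\delta^*\|=O(\sqrt{\epsilon})$, the linear term is of order $\sqrt{\epsilon}$ with a positive coefficient, while the remainder is $O(\epsilon)$. Hence for $\epsilon$ sufficiently small the increment is positive.

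\textbf{Step 4 (induction).} Applying Step 3 at every $i$ gives $\mathcal{J}(\pi_{i+1})\ge\mathcal{J}(\pi_i)$ for all $i$, from any $\pi_0$. Monotonicity, together with boundedness $|\mathcal{J}|\le R_{\max}/(1-\gamma)$, then yields convergence of $\{\mathcal{J}(\pi_i)\}$.

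\textbf{Main obstacle.} The delicate point is making "sufficiently small $\epsilon$" rigorous and uniform across iterations. The exact maximizer of Eq.~\ref{opt:eq3} must be shown to agree with $\delta^*$ up to $o(\sqrt{\epsilon})$. To handle this, I would use compactness of the constraint ellipsoid and bounded Hessians of $\mathcal{J}$, $C_{\mathrm{KL}}$ and the KL divergence. With those, the error between the true subproblem and its approximation is $O(\epsilon)$ in the objective, which dominates neither the first-order gain nor the sign. Alternatively, one can simply allow $\epsilon=\epsilon_i$ to depend on the iterate, which is what the statement's hypothesis implicitly permits.
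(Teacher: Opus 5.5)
Your proposal is correct and follows essentially the same route as the paper. The paper writes the KKT stationarity condition $\nabla\widetilde{\mathcal{J}}(\pi_{i+1})=\lambda^*\nabla D_{\mathrm{KL}}(\pi_{i+1}\|\pi_i)$, approximates the KL gradient by $F(\pi_i)v$ to get $v\approx(\lambda^*)^{-1}F^{-1}\nabla\widetilde{\mathcal{J}}(\pi_i)$, and uses the hypothesis to conclude $\nabla\mathcal{J}(\pi_i)^\top v\ge 0$. Your explicit step size, your $O(\sqrt{\epsilon})$ versus $O(\epsilon)$ comparison, and your treatment of the exact versus approximate maximizer make rigorous what the paper leaves as first-order approximations.

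One small correction: the case $g=0$ cannot occur under the hypothesis, since $g=0$ would make both sides of the condition equal. You can therefore drop your claim that $\pi_{i+1}=\pi_i$ in that case, which is not justified in general anyway, because a stationary point of $\widetilde{\mathcal{J}}$ need not be the constrained maximizer.
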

\begin{remark}[Monotonic Improvement]
	%The policy sequence $\{\pi_i\}_{i\ge0}$ improves monotonically with respect to the $\mathcal{J}(\pi_i)$ defined in Eq.\eqref{pre:eq4}. Under the standard assumption of bounded rewards, the sequence is guaranteed to converge to the optimal policy $\pi^*$.
	Theorem~\ref{thm3} establishes that the policy sequence $\{\pi_i\}_{i\ge0}$ improves monotonically with respect to the objective $\mathcal{J}(\pi_i)$ defined in Eq.~\ref{pre:eq4}. Together with Theorem~\ref{thm2}, the sequence converges to the optimal policy: $ \pi^*(a|s) \propto \mu(a|s) \exp \left( \frac{1}{\alpha} \bar{Q}^*(s, a) \right)$.
\end{remark}
\begin{remark}[Connection to the Expectation-Maximization (EM) algorithm]
	(1). E-Step: The posterior $P(T|E)$ evaluates the quality of
	each dynamics model, assigning them proportional selection
	probabilities. Because its metric $\mathcal{F}(T)$ depends on the current critic ($Q, V$), these probabilities must dynamically update to reflect model consistency under the evolving policy.
	(2). M-Step: Holding the current $P(T|E)$ fixed, we update
	the policy. Theorems~\ref{thm2} and \ref{thm3} provide the theoretical justification that this update step yields monotonic improvement.
\end{remark}
Crucially, the divergence constraint between $\pi$ and $\mu$ is introduced exclusively to ensure optimization stability, rather than to induce pessimism. Our experiments further validate this design: while PSPO's value estimates do not strictly constitute a lower bound on true returns (thus avoiding explicit pessimism), the absence of this stability constraint significantly degrades performance.
\subsection{Algorithm and Implementation}
Building upon the theoretical analysis established in the preceding subsection, we present the practical implementation of Posterior Sampling-based Policy Optimization (PSPO). PSPO optimizes the policy by leveraging dynamics models sampled from the posterior distribution $P(T|E)$. The procedure is detailed in Algorithm~\ref{algo}.

\paragraph{Dynamics Model Learning}
While the transition dynamics are theoretically treated as a random variable following a continuous distribution, maintaining such a distribution is computationally intractable. To bridge this gap, we approximate the posterior using a model ensemble, where each member is trained via MLE (details in Appendix~\ref{appendix:knowledge_of_Bayesian_RL} and Appendix~\ref{appendix:exp_dynamics_model}). Under a uniform prior, Eq.~\ref{reweight_metric} and Eq~\ref{model:eq1} are applied sequentially to each ensemble member, yielding a computationally efficient yet principled posterior approximation.

%\paragraph{Belief Distribution.}
%We construct a dynamics model ensemble and maintain a belief distribution over it. While the prior distribution $P(T)$ over model ensemble can be initialized in various ways to encode specific environmental prior knowledge, practical deployment environments often lack exhaustive initial specifications. It is common practice to initialize the prior as a uniform distribution, which inherently assigns low confidence to OOD regions \cite{chua2018deep}.

\paragraph{Policy Iteration}
We parameterize the Q-function and the policy using neural networks $Q_{\theta}$ and $\pi_{\phi}$, where $\theta$ and $\phi$ denote their respective parameters. To enhance training stability, we employ target networks with soft updates \cite{mnih2015human}. Following standard practice, we set the reference policy $\mu$ to the behavior policy used to collect the dataset. We augment the offline dataset $\mathcal{D}$ with the model-generated dataset $\widehat{\mathcal{D}}$ to effectively generalize the policy beyond the offline dataset support \cite{lurevisiting}. Policy evaluation is then performed based on the posterior sampling-based Bellman optimality operator (Eq.~\ref{opt:eq2}). For policy improvement, we reformulate the constrained optimization in Eq.~\ref{opt:eq3} into an unconstrained problem via the Karush-Kuhn-Tucker (KKT) conditions.
\section{Experiments}
Our experimental evaluation focuses on addressing the following four primary research questions:

\textit{Q1:Performance Evaluation.} What is the empirical performance of PSPO compared to SoTA offline RL algorithms?

\textit{Q2:Theoretical Consistency.} Do the empirical results align with our theoretical findings, particularly when deep neural networks are employed as function approximators?

%\textbf{Q3:Robustness.} Does PSPO demonstrate robustness against OOD transitions $(s, a, s', r)$?

\textit{Q3:Ablation Study.} How does each component of PSPO contribute to overall performance?

To comprehensively address these questions, we conduct extensive evaluations on the D4RL benchmark \cite{fu2020d4rl} and the challenging offline optimal liquidation environment \cite{bao2019multi,rigter2023one}. Detailed hyperparameter settings and environmental configurations are specified in Appendix~\ref{appendix:experiment}.

\begin{table*}[t!]
	\centering
	\caption{Performance comparison on D4RL datasets. Scores are normalized as $(\mathrm{score} - \mathrm{random}) / (\mathrm{expert} - \mathrm{random})$ and presented as mean $\pm$ standard deviation. Our results are averaged across 4 random seeds.}
	\label{tab:d4rl_performance}
	\setlength{\tabcolsep}{2pt} 
	\adjustbox{width=0.99\textwidth}
	{
		\begin{tabular}{lcccccccc}
			\toprule
			Dataset & CQL & DMG & EPQ & MOReL & RAMBO & PMDB & ADM & PSPO (Ours) \\
			\midrule
			% --- HalfCheetah ---
			HalfCheetah-Random        & 31.3$\pm$3.5 & 28.8$\pm$1.3 & 33.0$\pm$2.4 & 38.9$\pm$1.8 & 39.5$\pm$3.5 & 37.8$\pm$0.2 & \textbf{45.4}$\pm$2.8 & 37.7$\pm$0.5\\
			HalfCheetah-Medium        & 46.9$\pm$0.4 & 54.9$\pm$0.2 & 67.3$\pm$0.5 & 60.7$\pm$4.4 & 77.9$\pm$4.0 & 75.6$\pm$1.3 & 72.2$\pm$0.6 & \textbf{79.3}$\pm$0.9\\
			HalfCheetah-Expert        & 97.3$\pm$1.1 & 95.9$\pm$0.3 & \textbf{107.2}$\pm$0.2 & 8.4$\pm$11.8 & 79.3$\pm$15.1 & 105.7$\pm$1.0 & 89.4$\pm$26.4 & 93.3$\pm$0.6\\
			HalfCheetah-Medium-Expert & 95.0$\pm$1.4 & 91.1$\pm$4.2 & 95.7$\pm$0.3 & 80.4$\pm$11.7 & 95.4$\pm$5.4 & 108.5$\pm$0.5 & 103.7$\pm$0.2 & \textbf{109.7}$\pm$1.6\\
			HalfCheetah-Medium-Replay & 45.3$\pm$0.3 & 51.4$\pm$0.3 & 62.0$\pm$1.6 & 44.5$\pm$5.6 & 68.7$\pm$5.3 & 71.7$\pm$1.1 & 67.6$\pm$3.4 & \textbf{78.4}$\pm$0.8\\
			HalfCheetah-Full-Replay   & 76.9$\pm$0.9 & 79.9$\pm$1.2 & 85.3$\pm$0.7 & 70.1$\pm$5.1 & 87.0$\pm$3.2 & 90.0$\pm$0.8 & 86.3$\pm$1.7 & \textbf{94.9}$\pm$1.2\\
			\midrule
			% --- Hopper ---
			Hopper-Random        & 5.3$\pm$0.6  & 20.4$\pm$10.4 & 32.1$\pm$0.3 & \textbf{38.1}$\pm$10.1 & 25.4$\pm$7.5 & 32.7$\pm$0.1 & 32.7$\pm$0.2 & 31.9$\pm$0.6\\
			Hopper-Medium        & 61.9$\pm$6.4 & 100.6$\pm$1.9 & 101.3$\pm$0.2 & 84.0$\pm$17.0 & 87.0$\pm$15.4 & 106.8$\pm$0.2 & 107.4$\pm$0.6 & \textbf{108.5}$\pm$1.3\\
			Hopper-Expert        & 106.5$\pm$9.1 & 111.5$\pm$2.2 & 112.4$\pm$0.5 & 80.4$\pm$34.9 & 50.0$\pm$8.1 & 111.7$\pm$0.3 & 102.3$\pm$11.9 & \textbf{116.6}$\pm$1.2\\
			Hopper-Medium-Expert & 96.9$\pm$15.1 & 110.4$\pm$3.4 & 108.8$\pm$5.2 & 105.6$\pm$8.2 & 88.2$\pm$20.5 & 111.8$\pm$0.6 & 112.7$\pm$0.3 & \textbf{112.8}$\pm$1.4\\
			Hopper-Medium-Replay & 86.3$\pm$7.3 & 101.9$\pm$1.4 & 97.8$\pm$1.0 & 81.8$\pm$17.0 & 99.5$\pm$4.8 & 106.2$\pm$0.6 & 104.4$\pm$0.4 & 
			\textbf{110.0}$\pm$1.1\\
			Hopper-Full-Replay   & 101.9$\pm$0.6 & 106.4$\pm$1.1 & 108.5$\pm$0.6 & 94.4$\pm$20.5 & 105.2$\pm$2.1 & 109.1$\pm$0.2 & 108.5$\pm$0.7 & \textbf{112.5}$\pm$1.6\\
			\midrule
			% --- Walker2d ---
			Walker2d-Random        & 5.4$\pm$1.7  & 4.8$\pm$2.2  & \textbf{23.0}$\pm$0.7 & 16.0$\pm$7.7 & 0.0$\pm$0.3 & 21.8$\pm$0.1 & 22.2$\pm$0.2 & 22.1$\pm$0.1\\
			Walker2d-Medium        & 79.5$\pm$3.2 & 92.4$\pm$2.7 & 87.8$\pm$2.1 & 72.8$\pm$11.9 & 84.9$\pm$2.6 & 94.2$\pm$1.1 & 93.2$\pm$1.1 & \textbf{103.9}$\pm$2.5\\
			Walker2d-Expert        & 109.3$\pm$0.1 & 114.7$\pm$0.4 & \textbf{109.8}$\pm$1.0 & 62.6$\pm$29.9 & 1.6$\pm$2.3 & 115.9$\pm$1.9 & 5.5$\pm$1.3 & \textbf{109.8}$\pm$1.4\\
			Walker2d-Medium-Expert & 109.1$\pm$0.2 & 114.4$\pm$0.7 & 112.0$\pm$0.6 & 107.5$\pm$5.6 & 56.7$\pm$39.0 & 111.9$\pm$0.2 & 114.9$\pm$0.3 & \textbf{116.1}$\pm$1.0\\
			Walker2d-Medium-Replay & 76.8$\pm$10.0 & 89.7$\pm$5.0 & 85.3$\pm$1.0 & 40.8$\pm$20.4 & 89.2$\pm$6.7 & 79.9$\pm$0.2 & 95.6$\pm$2.1 & \textbf{97.0}$\pm$1.4\\
			Walker2d-Full-Replay   & 94.2$\pm$1.9 & 97.5$\pm$4.6 & 107.4$\pm$0.6 & 84.8$\pm$13.1 & 88.3$\pm$4.9 & 95.4$\pm$0.7 & 99.9$\pm$3.6 & \textbf{107.8}$\pm$1.2\\
			\midrule
			\textbf{Average} & 73.7 & 81.5 & 85.4 & 65.1 & 68.0 & 88.2 & 81.3 & \textbf{91.2}\\
			\bottomrule
		\end{tabular}
	}
\end{table*}
\begin{table*}[t!]
	\centering
	\caption{Performance comparison on offline optimal liquidation task.}
	\label{tab:currency_results}
	\setlength{\tabcolsep}{2.5pt}
	\adjustbox{width=0.99\textwidth}
	{
		\begin{tabular}{c c c c c c c c c c}
			\toprule
			Methods & PSPO (Ours) & PMDB & 1R2R  & ORAAC & RAMBO & CQL & IQL & MOPO & COMBO \\
			\midrule
			Score & \textbf{102.3}$\pm$1.5 & 85.5$\pm$2.3 & 78.8$\pm$1.6 & 0.0$\pm$0.0 & 99.6$\pm$0.6 & 89.4$\pm$1.3 & 0.0$\pm$0.0 & 64.7$\pm$21.8 & 55.0$\pm$29.0 \\
			%CVaR$_{0.1}$ & 62.8$\pm$4.3 & 64.0$\pm$1.9 & 0.0$\pm$0.0 & 28.7$\pm$0.8 & 41.7$\pm$1.3 & 0.0$\pm$0.0 & 27.0$\pm$8.3 & 55.0$\pm$22.2 & 16.9$\pm$6.5 & 49.0$\pm$25.9 \\
			\bottomrule
		\end{tabular}
	}
	
\end{table*}

\subsection{Performance Evaluation (Q1)}
To answer Q1, we benchmark PSPO against leading model-free (e.g., CQL \cite{kumar2020conservative}, EPQ \cite{NEURIPS2024_fdb11be1}, DMG  \cite{maodoubly}) and model-based approaches (e.g., MOReL \cite{kidambi2020morel}, RAMBO \cite{rigter2022rambo}, PMDB  \cite{guo2022model}, ADM \cite{lin2025anystep}). As shown in Table~\ref{tab:d4rl_performance}, PSPO achieves superior performance on 14 datasets and remains competitive on the remaining 4. Results are sourced from original papers or reproduced using official configurations where necessary. Notably, PSPO demonstrates substantial gains on datasets containing a limited number of high-quality trajectories (e.g., ``medium'', ``medium-expert'', and ``full-replay''). To further assess the results, a paired t-test against the strongest baseline (PMDB) yields $t=2.019$ and $p \approx 0.029$, confirming that the observed improvements are statistically significant at the $5\%$ level.

\begin{figure*}[t!]
	\centering
	% 1
	\begin{subfigure}[b]{0.245\columnwidth}
		\includegraphics[width=\linewidth]{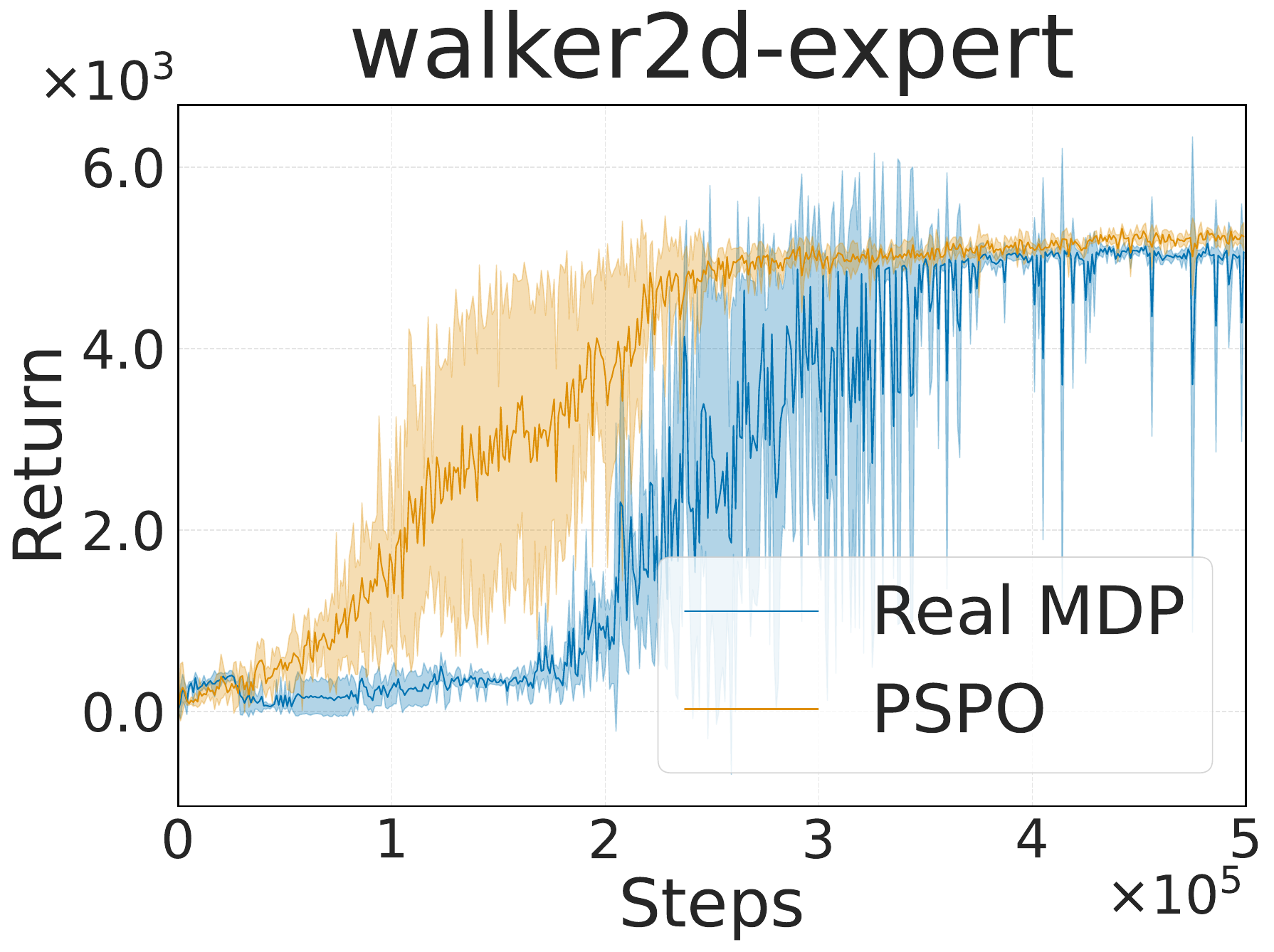}
	\end{subfigure}
	\begin{subfigure}[b]{0.245\columnwidth}
		\includegraphics[width=\linewidth]{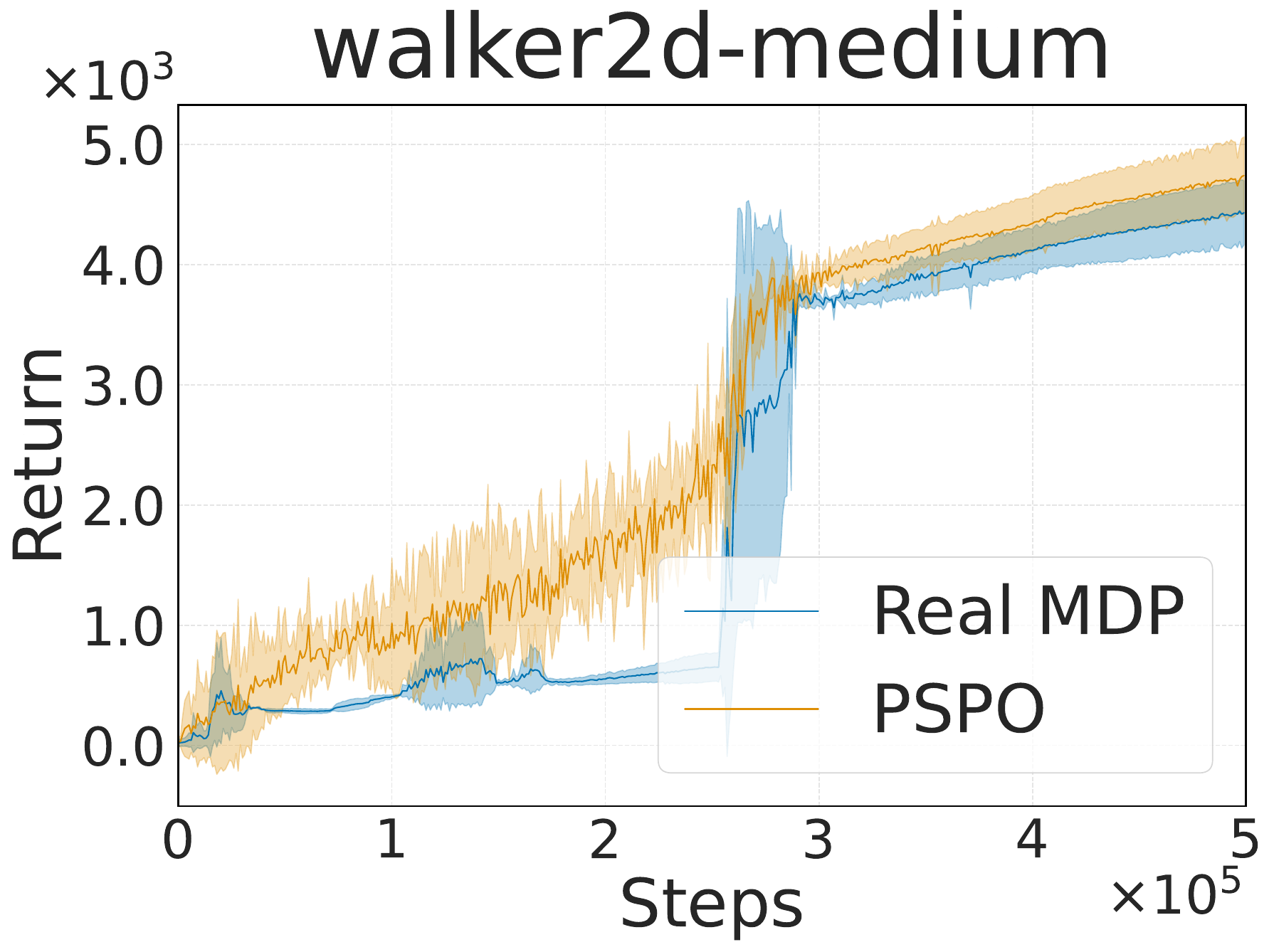} 
	\end{subfigure}
	\centering
	% 2
	\begin{subfigure}[b]{0.245\columnwidth}
		\includegraphics[width=\linewidth]{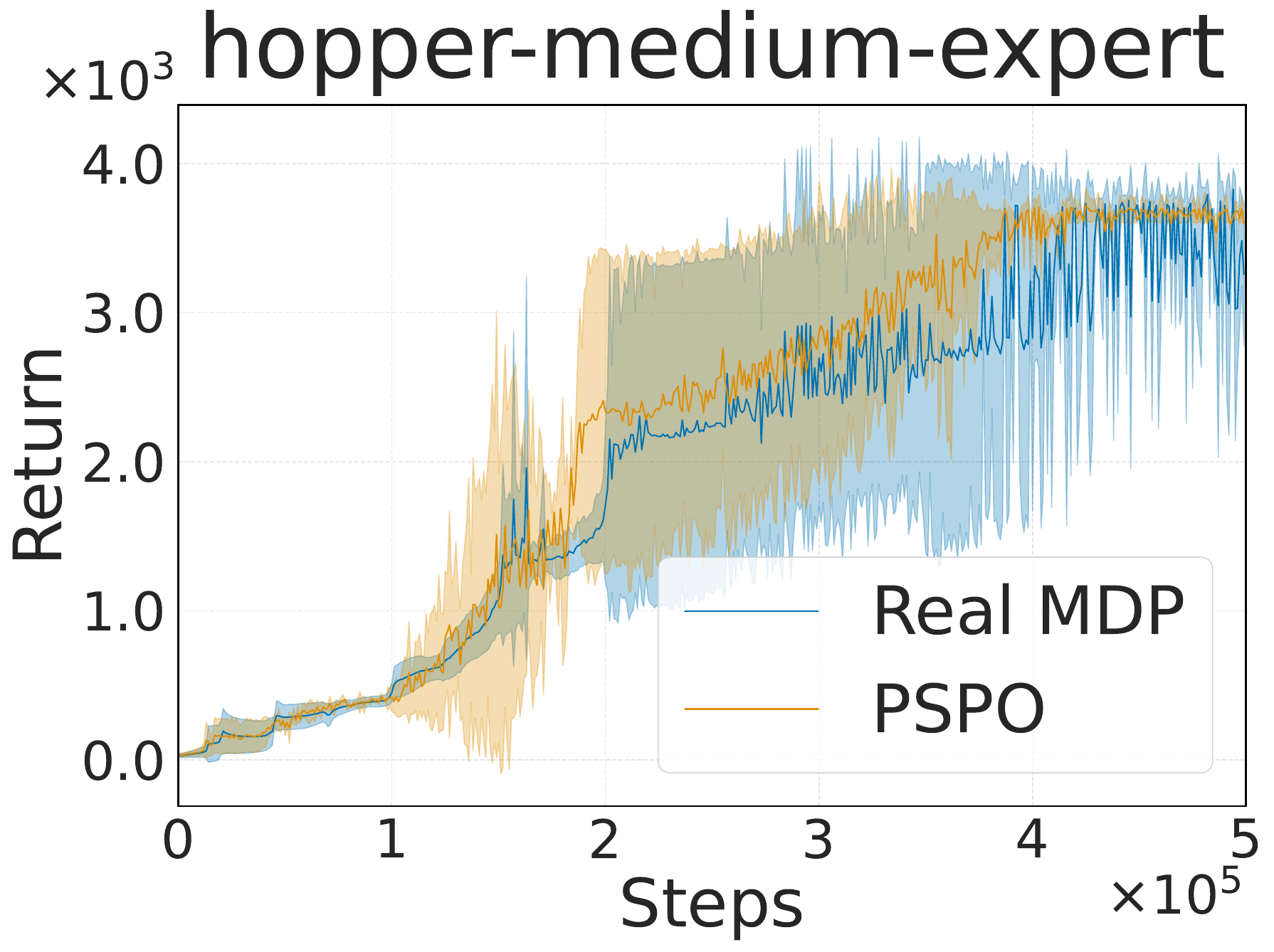} 
	\end{subfigure}
	\begin{subfigure}[b]{0.245\columnwidth}
		\includegraphics[width=\linewidth]{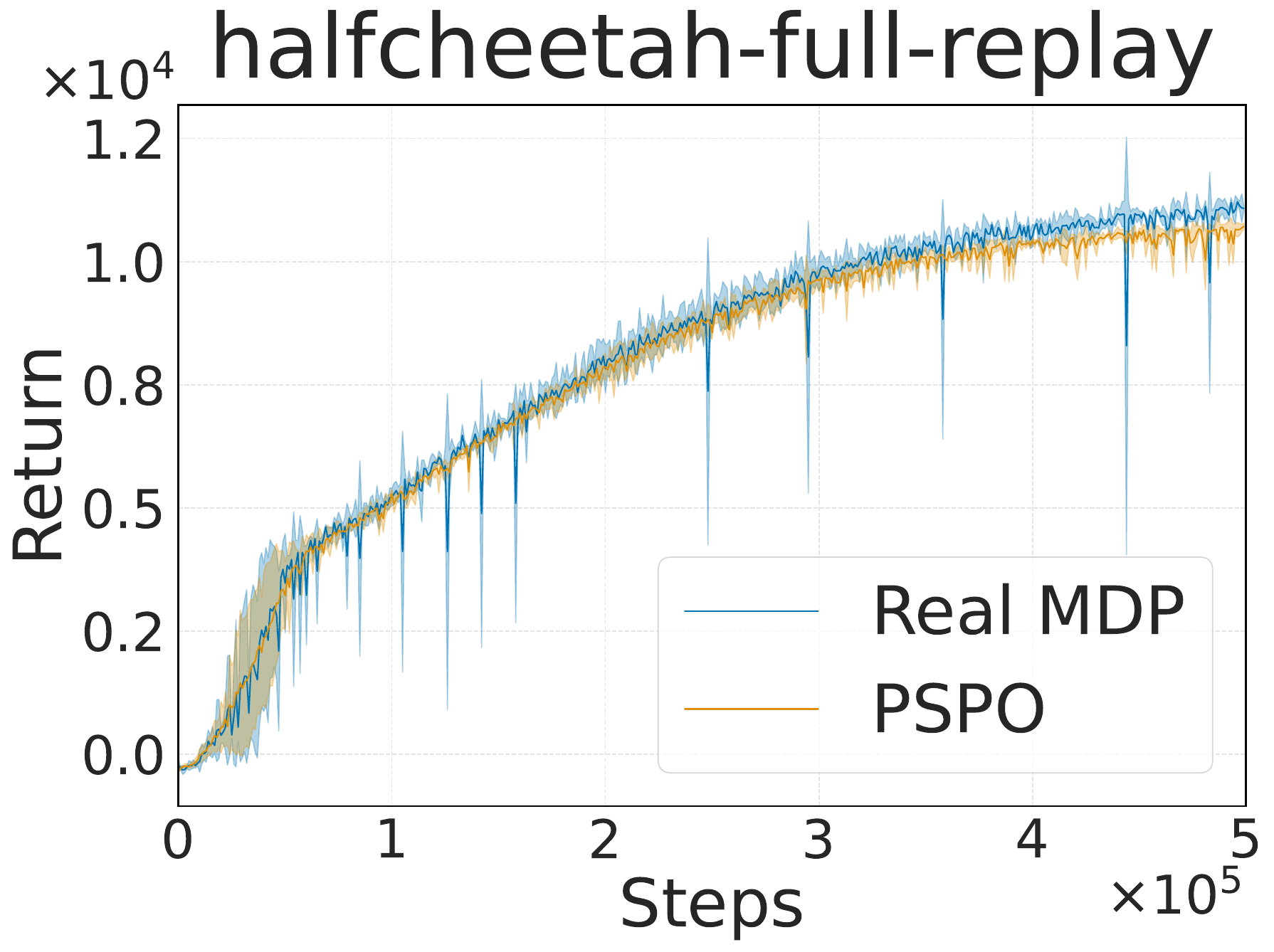} 
	\end{subfigure}
	
	\caption{Learning and evaluation curves. We observe that the true performance improves near-monotonically, even though the returns estimated by the model exhibit no pessimism.}
	\label{exp_fig1:pessimism}
\end{figure*}

\begin{figure*}[t!]
	\centering
	% 1
	\begin{subfigure}[b]{0.245\columnwidth}
		\includegraphics[width=\linewidth]{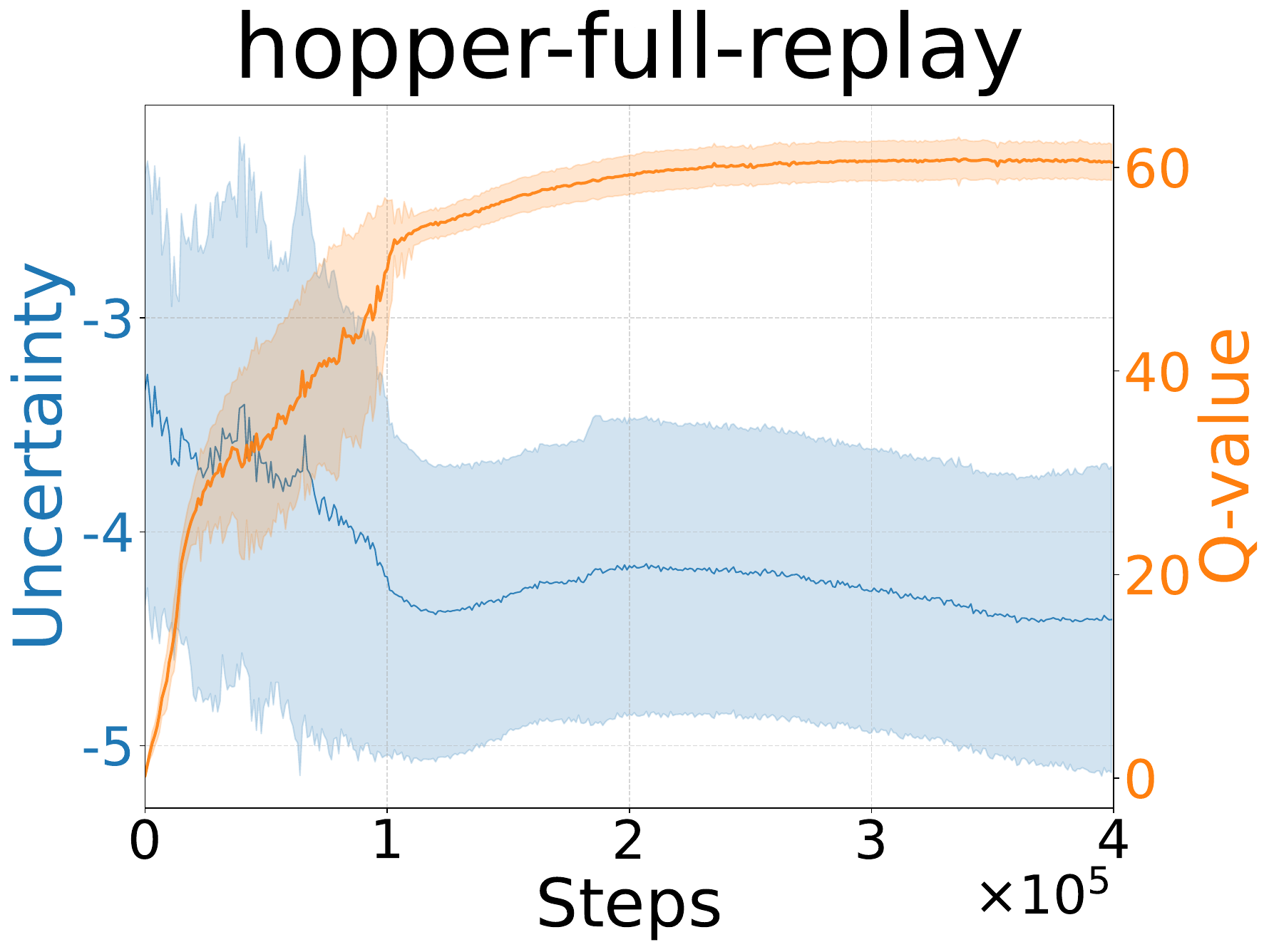}
	\end{subfigure}
	\begin{subfigure}[b]{0.245\columnwidth}
		\includegraphics[width=\linewidth]{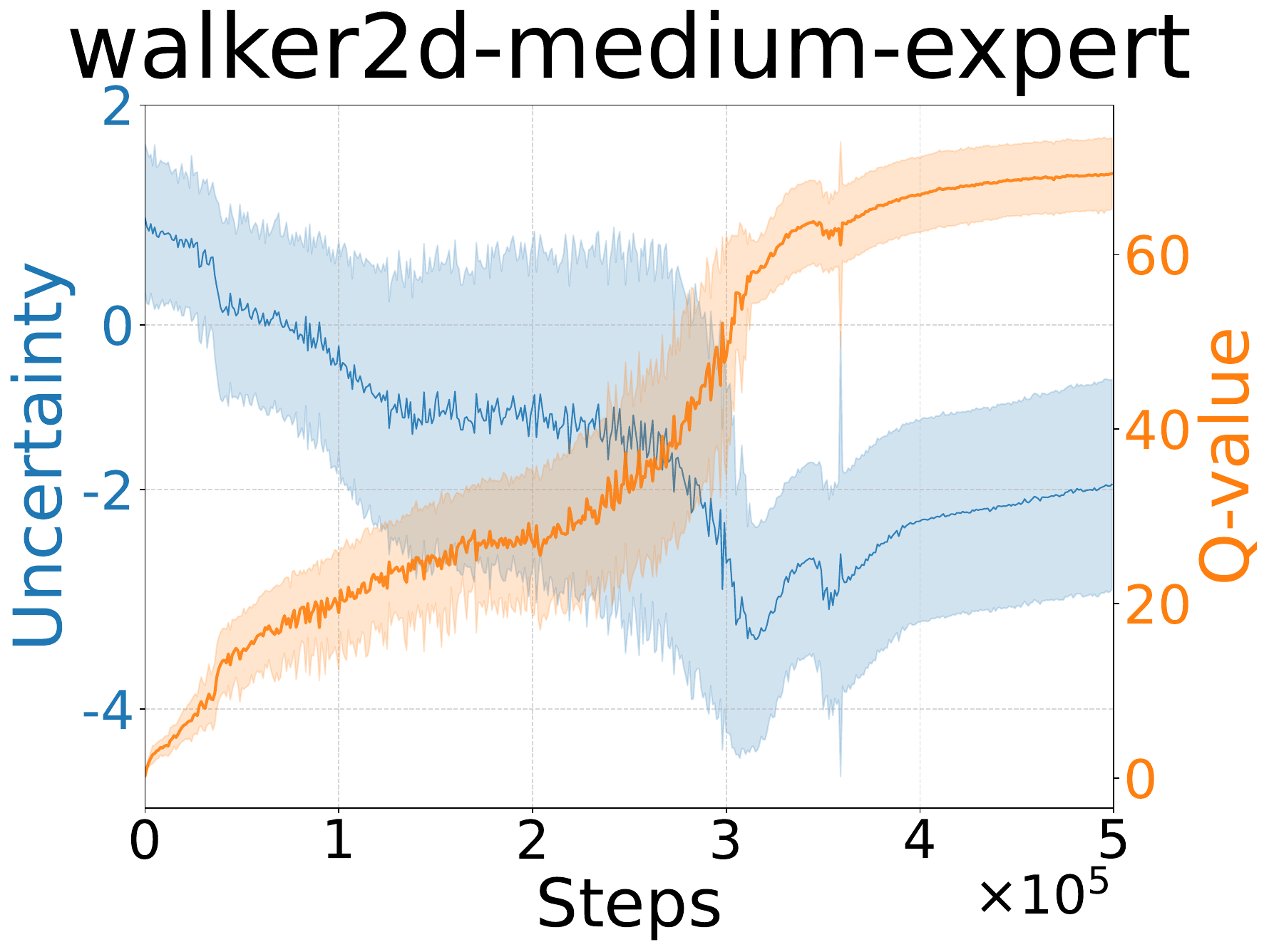} 
	\end{subfigure}
	\centering
	% 2
	\begin{subfigure}[b]{0.245\columnwidth}
		\includegraphics[width=\linewidth]{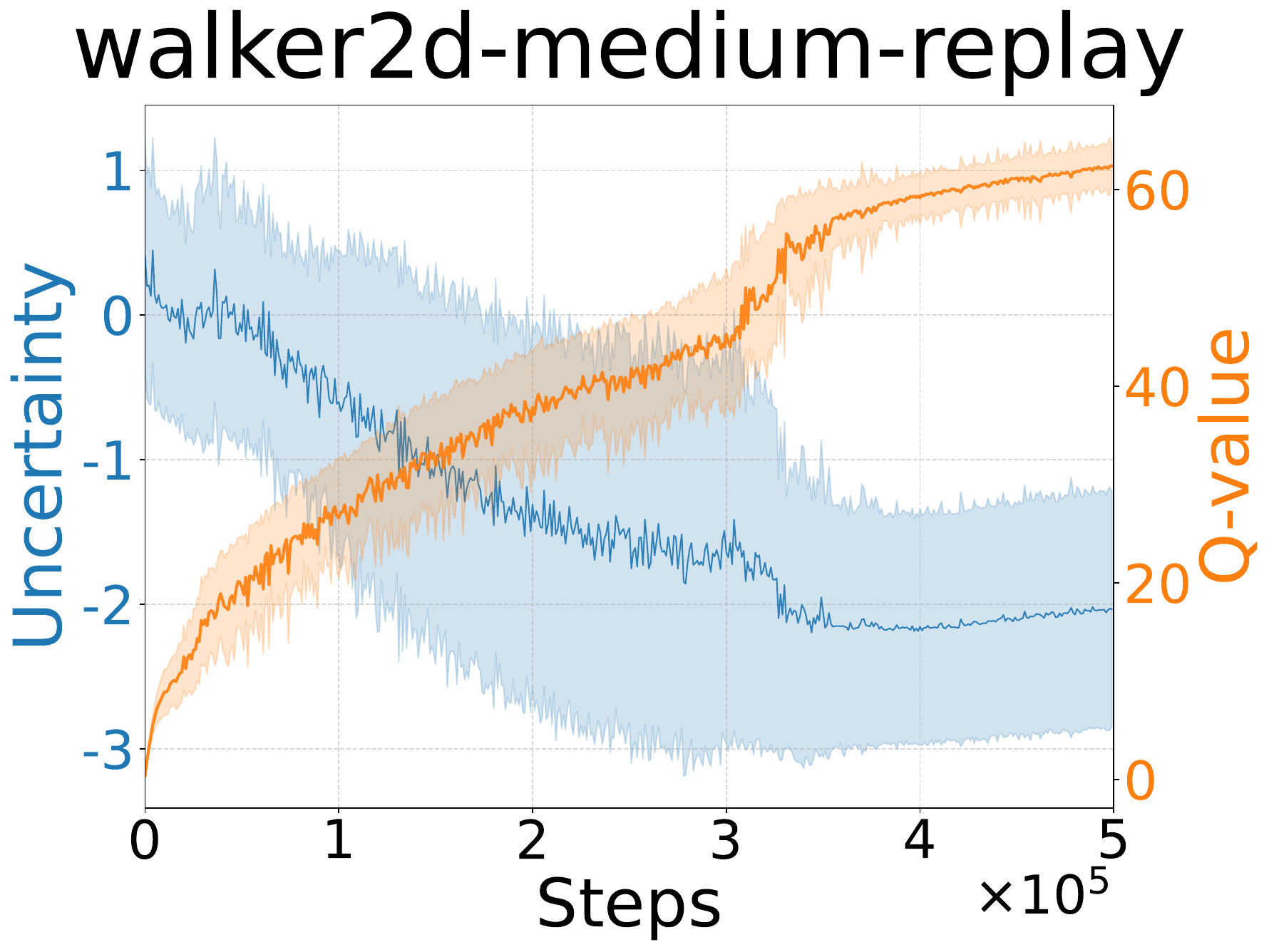} 
	\end{subfigure}
	\begin{subfigure}[b]{0.245\columnwidth}
		\includegraphics[width=\linewidth]{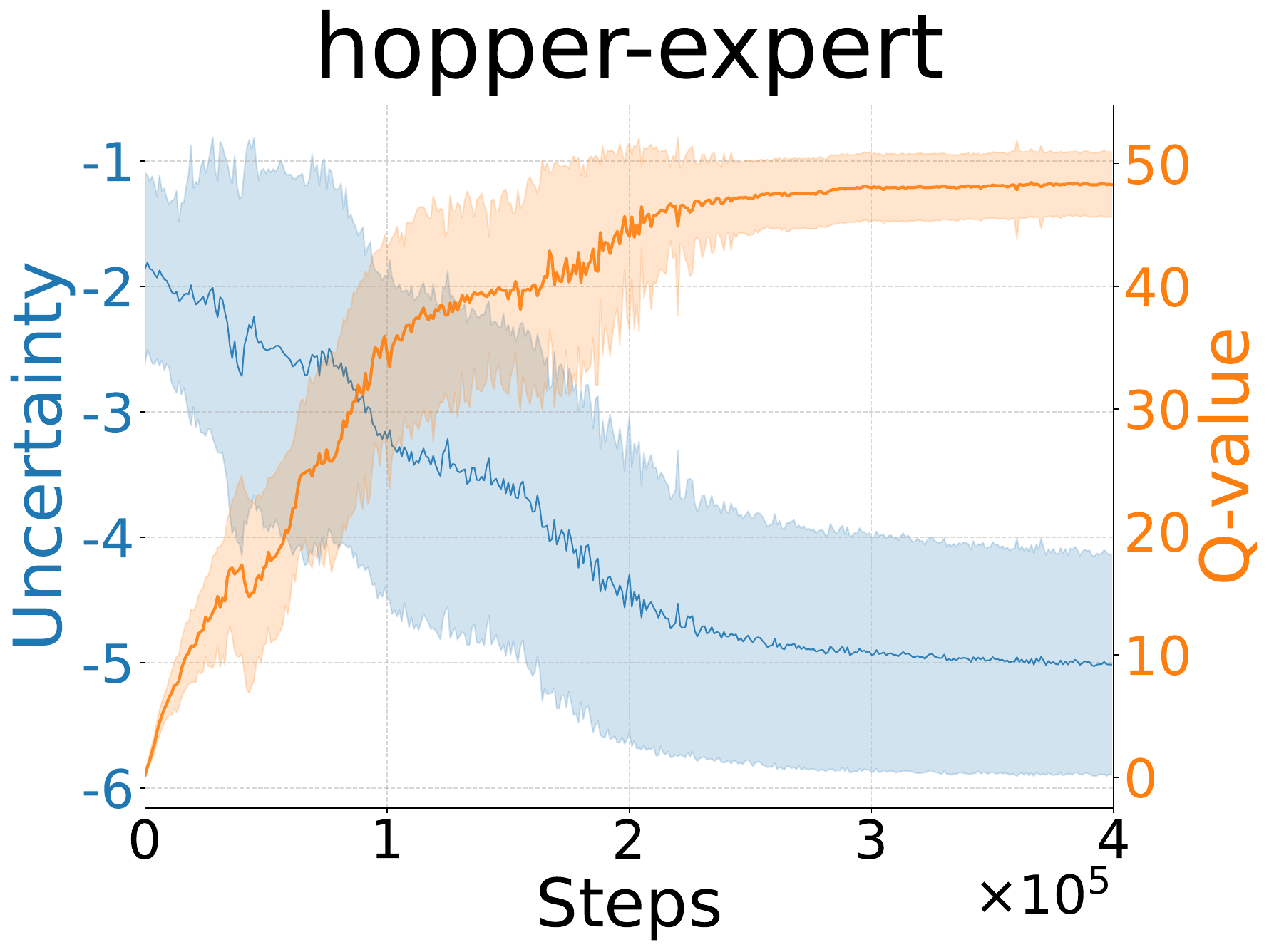} 
	\end{subfigure}
	
	\caption{Correlation between uncertainty and the TD target. The target is defined as $Q(s,a)=r(s,a)+\gamma\mathbb{E}_{s'\sim T, a'\sim\pi}[Q(s',a')]$. Here, uncertainty for a pair $(s,a)$ is quantified as $\log\big(\mathrm{std}(\mathbb{E}_{s'\sim T}[s'])\big)$ with $T \sim P(T|E)$.}
	\label{exp_fig2:uncertainty}
\end{figure*}

\begin{table*}[t!]
	\centering
	\caption{Ablation study results. We report the percentage drop in scores ($\downarrow$) and the fold change in standard deviation ($\uparrow/\downarrow $). The notation `x' indicates the fold change (e.g., $\uparrow$9x represents a 9-fold increase). We observe that ablating either component leads to substantial  performance degradation and a notable increase in standard deviation.}
	\label{tab:ablation_study}
	\setlength{\tabcolsep}{13.5pt} % 调整列间距
	\adjustbox{width=0.99\textwidth}{
		\begin{tabular}{lcccc}
			\toprule
			\textbf{Dataset Type} & \textbf{Ablation Setting} & \textbf{HalfCheetah} & \textbf{Hopper} & \textbf{Walker2d} \\
			\midrule
			
			% --- Random ---
			\multirow{2}{*}{Random} 
			& Average Utilization & $\downarrow$19.1\% ($\uparrow$9x)   & $\downarrow$68.1\% ($\downarrow$0.5x) & $\downarrow$14.5\% ($\uparrow$35x) \\
			& Without Regularization & $\downarrow$13.3\% ($\uparrow$1.7x) & $\downarrow$68.1\% ($\uparrow$1.1x)   & $\downarrow$13.9\% ($\uparrow$53x) \\
			\cmidrule(lr){1-5} % 分隔线
			
			% --- Medium ---
			\multirow{2}{*}{Medium} 
			& Average Utilization & $\downarrow$5.9\% ($\uparrow$2.2x)  & $\downarrow$68.1\% ($\downarrow$0.9x) & $\downarrow$3.9\% ($\downarrow$0.96x) \\
			& Without Regularization & $\downarrow$5.8\% ($\uparrow$1.6x)  & $\downarrow$13.2\% ($\uparrow$9.5x)   & $\downarrow$9.1\% ($\uparrow$1.6x) \\
			\cmidrule(lr){1-5}
			
			% --- Expert ---
			\multirow{2}{*}{Expert} 
			& Average Utilization & $\downarrow$11.2\% ($\uparrow$16x)  & $\downarrow$53.4\% ($\uparrow$4.4x)   & $\downarrow$67.1\% ($\uparrow$13.5x) \\
			& Without Regularization & $\downarrow$23.9\% ($\uparrow$4.4x)  & $\downarrow$20.0\% ($\uparrow$14.9x)  & $\downarrow$78.6\% ($\uparrow$2x) \\
			\cmidrule(lr){1-5}
			
			% --- Medium-Expert ---
			\multirow{2}{*}{Medium-Expert} 
			& Average Utilization & $\downarrow$3.9\% ($\uparrow$3.1x)   & $\downarrow$9.9\% ($\uparrow$5.4x)    & $\downarrow$16.3\% ($\uparrow$12x) \\
			& Without Regularization & $\downarrow$17.3\% ($\uparrow$1.5x)  & $\downarrow$12.8\% ($\uparrow$8x)     & $\downarrow$8.3\% ($\uparrow$3.6x) \\
			\cmidrule(lr){1-5}
			
			% --- Medium-Replay ---
			\multirow{2}{*}{Medium-Replay} 
			& Average Utilization & $\downarrow$9.4\% ($\uparrow$4x)     & $\downarrow$67.9\% ($\uparrow$1.3x)   & $\downarrow$1.7\% ($\uparrow$2.2x) \\
			& Without Regularization & $\downarrow$9.1\% ($\uparrow$2.1x)    & $\downarrow$4.7\% ($\uparrow$6.8x)    & $\downarrow$8.1\% ($\uparrow$1.1x) \\
			\cmidrule(lr){1-5}
			
			% --- Full-Replay ---
			\multirow{2}{*}{Full-Replay} 
			& Average Utilization & $\downarrow$5.6\% ($\uparrow$1.9x)   & $\downarrow$11.1\% ($\uparrow$6x)     & $\downarrow$4.8\% ($\uparrow$1.9x) \\
			& Without Regularization & $\downarrow$7.4\% ($\uparrow$1.6x)   & $\downarrow$10.3\% ($\uparrow$10.4x)  & $\downarrow$9.3\% ($\uparrow$1.5x) \\
			
			\bottomrule
		\end{tabular}
	}
\end{table*}
\subsection{Theoretical Consistency (Q2)}
\paragraph{Pessimism-free}
%As illustrated in Figure~\ref{exp_fig1:pessimism}, the model-predicted returns under PSPO consistently exceed the true performance. This observation confirms the absence of pessimism in our approach, which permits exploration within dynamics-consistent OOD regions to facilitate generalization. Notably, the true performance exhibits monotonic improvement until convergence, thereby demonstrating the policy's robustness against model exploitation and providing empirical validation of Theorems~\ref{thm2} and \ref{thm3}. These results further  confirm that the divergence constraint between $\pi$ and $\mu$ is introduced exclusively to ensure optimization stability, rather than to induce pessimism.

Figure~\ref{exp_fig1:pessimism} demonstrates that PSPO effectively avoids excessive pessimism, as evidenced by model-predicted returns consistently exceeding actual performance while maintaining monotonic improvement. These results empirically validate Theorems~\ref{thm2} and \ref{thm3}, confirming that our divergence constraint serves to stabilize optimization and prevent model exploitation rather than to restrict exploration in dynamics-consistent OOD regions.

\paragraph{Robustness}
Figure~\ref{exp_fig2:uncertainty} reveals a negative correlation (Spearman’s rank correlation $\tau=-0.2995$) between the TD target and our uncertainty metric. This confirms that model disagreement induces an adaptive posterior reweighting that mitigates overestimation. Rather than using indiscriminate pessimism, PSPO selectively penalizes state-action pairs that deviate from the data distribution to guide the policy back to dynamics-consistent regions, successfully balancing generalization with OOD robustness.

\paragraph{Generalization}
To assess performance under high uncertainty, we employ a currency liquidation environment in which an agent must liquidate assets over a fixed horizon $T$ amid stochastic rate fluctuations. In this environment, the behavioral policy is heavily biased toward ``hold" actions (80\%). The underlying market dynamics follow a coherent Ornstein-Uhlenbeck process (details in Appendix~\ref{appendix:liquidation}). We benchmark PSPO against additional model-free (ORAAC \cite{urp2021riskaverse}, IQL \cite{kostrikov2021offline}) and model-based (MOPO \cite{yu2020mopo}, COMBO \cite{yu2021combo}) methods. As summarized in Table~\ref{tab:currency_results}, while existing methods struggle to converge in this highly stochastic setting, PSPO consistently achieves superior stability and higher cumulative returns. This performance indicates that PSPO extends beyond simple value suppression in OOD regions. By leveraging the posterior distribution to provide dynamically consistent estimates, PSPO facilitates safe extrapolation and effective generalization, preventing catastrophic failures during policy deviation while identifying superior decision sequences in adjacent state-action supports.
\subsection{Ablation Study (Q3)}
To isolate the impact of PSPO's distinct components, we conduct an ablation study focusing on the posterior sampling mechanism (Eq.~\ref{reweight_metric}-Eq.~\ref{model:eq1}) and the reference policy regularization (Eq.~\ref{opt:eq1}). For the posterior sampling component, we disable the reweighting process based on the evidence term $\mathcal{F}(T)$. Instead, we enforce a uniform posterior distribution over the dynamics models, designating this variant as ``Average Utilization."  Regarding the regularization component, we exclude the reference policy $\mu$ entirely, referring to this variant as ``Without Regularization."  As shown in Table~\ref{tab:ablation_study}, ablating either component significantly degrades performance and increases training instability, evidenced by a larger standard deviation.
\section{Related Work}
\paragraph{Offline RL}
Offline RL aims to learn optimal policies entirely from a fixed dataset of historical interactions, without further access to the environment \cite{Levine2020OfflineRL}. The primary challenge in offline RL is distribution shift \cite{brandfonbrener2021offline,adaptive2023zhang}. In practice, this issue is typically mitigated by employing pessimistic approaches. In model-free offline RL, pessimism is realized either by explicitly penalizing the Q-values of OOD actions \cite{kumar2020conservative,yeom2024exclusively,huang2024efficient}, or by implicitly assigning a low weight to them \cite{kostrikov2021offline,maodoubly}. In model-based offline RL, pessimism is commonly introduced by either constructing a dynamics model ensemble and using the predicted uncertainty (e.g., standard deviation) as a penalty \cite{kidambi2020morel,yu2020mopo,sun2023model,Qiao_Lyu_Jiao_Liu_Li_2025}, or by formulating a two-player zero-sum game based on robust MDPs \cite{rigter2022rambo,guo2022model,dong2024online}. However, recent literature confirms a positive correlation between the generalization ability of offline RL and state-action space coverage, suggesting that pessimism may detrimentally affect generalization \cite{mediratta2024gengap,park2024value}.
Therefore, despite efforts to broaden coverage via optimistic sampling \cite{zhai2024optimistic} or high-fidelity trajectory synthesis \cite{lin2025anystep}, these approaches still resort to uncertainty-based penalization to realize pessimism, limiting their ability to fully leverage model generalization.

We propose a novel model-based algorithm that treats the dynamics model as a random variable and maintains a corresponding distribution. Compared to prior methods, our approach explicitly avoids uncertainty-based penalization, leading to a theoretically non-pessimistic framework.
\paragraph{Bayesian RL}
Bayesian RL manages uncertainty by first constructing a posterior belief over an ensemble of dynamics models and subsequently optimizing a policy under this belief \cite{wei2023bayesian,NEURIPS2024_fdb11be1,ni2025long}. Existing Bayesian RL works diverge primarily in belief construction and policy optimization. For belief construction, existing methods either derive the posterior through Bayesian inference with an implicit likelihood or assume a fixed posterior directly induced by the offline dataset \cite{rigter2022rambo,guo2022model,dong2024online,ijcai2024p427}. As for policy optimization in Bayesian RL, methods typically introduce pessimism through risk-sensitive objectives, such as Conditional Value at Risk (CVaR) \cite{lobo2020soft,rigter2022planning}, or ensure stability via regularized policy iteration \cite{neu2017unifiedviewentropyregularizedmarkov,guo2022model,lin2026robust}. In the context of offline Bayesian RL, the finite capacity of the dataset is typically insufficient to uniquely identify the true dynamics model \cite{ghosh2022offline}. While Bayesian RL offers a natural framework to quantify this epistemic uncertainty \cite{ijcai2024p427}, prior works typically rely on pessimism by enforcing conservative constraints on belief formation or the optimization objective \cite{rigter2022rambo,ghosh2022offline,rigter2023one}. This dependence on pessimistic regularization inevitably limits the inherent adaptability of Bayesian RL.

Unlike prior works, PSPO imposes no structural assumptions on the posterior form and avoids pessimism. Our approach achieves generalization through posterior sampling, while ensuring robustness against OOD exploitation through the policy optimization process.
\section{Conclusion}
We present PSPO, a pessimism-free model-based offline RL framework that balances generalization with robustness against model exploitation. By synthesizing dynamics-consistent OOD transitions via posterior sampling and employing constrained optimization to regularize policy updates, PSPO effectively generalizes beyond the dataset support while maintaining stability. Theoretically, we formulate Q-value estimation under posterior sampling as a stochastic approximation process with guaranteed convergence. Furthermore, we prove that the policy optimization ensures monotonic performance gains until convergence. Experimental results demonstrate that PSPO achieves SoTA performance across diverse benchmarks. A promising direction involves utilizing generative modeling to learn data-driven informative priors, thereby reducing the reliance on manual specifications or standard uninformative priors. We anticipate that this integration will bolster the generalization of offline policies and facilitate more sample-efficient learning in high-dimension environments.

% One limitation of PSPO lies in the computational overhead during training, as generating rollouts requires forward inference across the entire model ensemble. For future work, given that informative priors significantly enhance belief reliability and sample efficiency, a promising direction for future research is to leverage expressive generative models to automatically acquire such priors in complex environments.
%From a Bayesian perspective, PSPO interprets dynamics modeling as an inference process where the posterior $P(T|E)$ is derived directly from observed transition evidence $E = (s, a, r, s')$. Correspondingly, the model rollout phase formalizes model utilization as a Bayesian predictive process (Eq.\eqref{pre:eq3}). Central to this prediction is a sampling mechanism akin to Thompson Sampling, which draws models from the inferred posterior to generate physically consistent OOD transitions, thereby facilitating generalization without explicit pessimism. Simultaneously, PSPO secures robustness against model errors via a dual-constraint mechanism: the reference policy $\mu$ anchors $\pi$ to the offline data support, while trust-region constraints restrict the update magnitude to guarantee monotonic improvement. To conclude, PSPO reconciles model generalization with robustness against model exploitation in OOD regions, affirmatively answering the fundamental question raised in our introduction.

\clearpage
\newpage

\bibliographystyle{plainnat}
\setcitestyle{numbers}
\bibliography{pspo_arxiv_ref}

\clearpage
\newpage
\beginappendix

%%%%%%%%%%%%%%%%%%%%%%%%%%%%%%%%%%%%%%%%%%%%%%%%%%%%%%%%%%%%

\setcounter{theorem}{0}
\setcounter{proposition}{0}
\setcounter{definition}{0}

\section{Basic Knowledge about Bayesian RL}
\label{appendix:knowledge_of_Bayesian_RL}
We begin by reviewing the formal framework of Bayesian Reinforcement Learning (BRL), which offers a rigorous method for handling uncertainty in decision-making. We then discuss how these principles are pragmatically adapted in modern Model-Based Deep Reinforcement Learning (MBRL) through the use of deep ensembles and trajectory sampling.
\subsection{Standard Bayesian Reinforcement Learning}
The standard RL problem is modeled as a Markov Decision Process (MDP), defined by the tuple $\mathcal{M} = \langle \mathcal{S}, \mathcal{A}, T, R, \gamma \rangle$. In the classical frequentist view, the transition dynamics $T(s'|s,a)$ and reward function $R(s,a)$ are assumed to be fixed but unknown parameters. Bayesian RL, in contrast, treats these unknown parameters as random variables. It maintains a prior distribution $P(\theta)$ over the model parameters $\theta = \{T, R\}$, quantifying the agent's subjective uncertainty about the environment.
\subsubsection{Inference and Belief State}
As the agent interacts with the environment and collects a history of transitions $\mathcal{D}_t = \{(s_i, a_i, r_i, s'_{i})\}_{i=0}^{t}$, it updates its posterior distribution over the parameters using Bayes' rule: 
\begin{equation*}
	P(\theta | \mathcal{D}_t) \propto P(\mathcal{D}_t | \theta) P(\theta).
\end{equation*}

This posterior distribution represents the agent's current knowledge. For discrete MDPs, conjugate priors such as the Dirichlet distribution for transition probabilities and the Gamma-Normal pair for rewards are commonly employed to maintain the posterior within the same family while ensuring computational tractability.

To formalize the temporal evolution of this knowledge, we define the belief transition as a recursive Bayesian filtering process. Let $b_t(\theta)$ denote the probability density function over the parameters at time step $t$. Then, for $b_{t+1}(\theta)=P(\theta | \mathcal{D}_{t+1})$ , we have:

\begin{equation*}
	\begin{aligned}
		b_{t+1}(\theta) &= P(\theta | \mathcal{D}_{t+1}) \\
		&= P(\theta | {s', r}, {s, a}, \mathcal{D}_t) \\
		&= \frac{\overbrace{P(s', r | s, a, \mathcal{D}_t, \theta)}^{\text{Likelihood}} \cdot \overbrace{P(\theta | s, a, \mathcal{D}_t)}^{\text{Prior}}}{\underbrace{P(s', r | s, a, \mathcal{D}_t)}_{\text{Normalization Term}}}.
	\end{aligned}
\end{equation*}
Invoking the Markov property, the transition dynamics are independent of the history $\mathcal{D}_t$ given the current state, action, and parameters: $P(s', r | s, a, \mathcal{D}_t, \theta) = P(s', r | s, a, \theta)$. Furthermore, given $\mathcal{D}_t$, the parameter $\theta$ is conditionally independent of the current state-action pair $(s, a)$, as $(s, a)$ reveals no information about $\theta$ before the transition is observed. Consequently, we have $P(\theta | s, a, \mathcal{D}_t) = P(\theta | \mathcal{D}_t) := b_t(\theta)$. Applying Bayes' rule yields the update:
\begin{equation*}
	b_{t+1}(\theta) = \eta \cdot \underbrace{P(s', r | s, a, \theta)}_{\text{Likelihood}} \cdot \underbrace{b_t(\theta)}_{\text{Prior}},
\end{equation*}
where $\eta$ is the normalization constant.
\subsubsection{The Bayes-Adaptive MDP (BAMDP)}
Formally, the BRL problem is cast as finding the optimal policy within a Bayes-Adaptive MDP (BAMDP), denoted as the tuple $\mathcal{M}^+ = \langle \mathcal{S}^+, \mathcal{A}, T^+, R^+, \gamma \rangle$. In this augmented framework, the state space $\mathcal{S}^+ = \mathcal{S} \times \mathcal{B}$ concatenates the physical state $s$ with the belief state $b(\theta) = P(\theta|\mathcal{D}_t)$, which represents the posterior distribution over the unknown system parameters $\theta$ (i.e., transition dynamics and reward functions). The transition dynamics in the hyper-state are governed by $T^+$, which defines the evolution $(s, b) \to (s', b')$. Specifically, the probability of such a transition is quantified by the predictive distribution $P(s' | s, b, a)$. The probability of transitioning to a physical state $s'$ given the current hyper-state $(s, b)$ and action $a$ is obtained by marginalizing over the parameter space $\Theta$:
\begin{equation*}
	P(s' | s, b, a) = \int_{\Theta} T(s' | s, a; \theta) b(\theta) d\theta.
\end{equation*}
Crucially, the belief state evolves deterministically. Upon observing a transition $(s, a, s')$, the belief updates via Bayes' rule, $b' = \Psi(b, s, a, s')$, where $\Psi$ is the Bayesian update operator. The objective of the agent is to maximize the expected cumulative reward with respect to its belief. This is encapsulated by the Bellman optimality equation for BAMDPs:
\begin{equation*}
	V^*(s, b) = \max_{a \in \mathcal{A}} \left[ R(s, a) + \gamma \sum_{s' \in \mathcal{S}} P(s' | s, b, a) V^*(s', \Psi(b, s, a, s')) \right].
\end{equation*}
This formulation naturally resolves the exploration-exploitation dilemma. The optimal value function $V^*(s, b)$ encodes an implicit incentive mechanism that prioritizes actions leading to belief states $b'$
with reduced entropy (equivalent to maximizing information gain). This systematic preference for uncertainty reduction inherently drives the agent to explore uncharted regions of the state space, thereby improving the precision of its internal model estimates. However, exact planning via the Bellman equation is computationally intractable for most non-trivial problems due to the continuous and high-dimensional nature of the belief space $\mathcal{B}$. To circumvent this intractability, BRL algorithms frequently employ sampling-based approximations. A prominent example is Thompson Sampling (TS). Instead of maximizing the expectation over the full posterior predictive distribution $P(s' | s, b, a)$, TS samples a single hypothesis $\hat{\theta} \sim b_t(\theta)$ at each step (or episode) and selects the action that is optimal with respect to this sampled model:
\begin{equation*}
	a_t = \arg\max_{a \in \mathcal{A}} Q^*(s, a; \hat{\theta}).
\end{equation*}
This heuristic effectively converts the complex Bayesian planning problem into a standard MDP planning problem, relying on the variance of the posterior $b_t$ to drive stochastic exploration.
\subsection{Modern Approximation: Deep Ensembles in Model-Based RL}
In high-dimensional environments (e.g., robotics or visual tasks), the dynamics are modeled using Deep Neural Networks (DNNs), parameterized by weights $\mathbf{w}$. The posterior distribution over network weights $P(\mathbf{w}|\mathcal{D})$ lies on a complex manifold in a high-dimensional space, rendering exact Bayesian inference and conjugate updates impossible.
\subsubsection{Deep Ensembles as Approximate Posteriors}
To bridge the gap between BRL theory and Deep RL scalability, modern approaches typically employ Deep Ensembles (or Bootstrapped Ensembles) to approximate the posterior. An ensemble of $K$ probabilistic neural networks, $\{\psi_{\mathbf{w}_k}\}_{k=1}^K$, is trained on the collected data. Each network outputs a distribution over the next state (e.g., a Gaussian $\mathcal{N}(\mu_\mathbf{w}(s,a), \Sigma_\mathbf{w}(s,a))$) to capture aleatoric uncertainty (inherent stochasticity). Crucially, the set of networks serves as a non-parametric approximation of the posterior over the function space. The variance of predictions across the ensemble members quantifies the epistemic uncertainty. This aligns with the BRL philosophy: regions of the state space with sparse data will induce high disagreement among ensemble members (high posterior variance), thereby encouraging exploration.
\subsubsection{Trajectory Sampling (TS)}
To perform planning with these ensembles, methods such as Probabilistic Ensembles with Trajectory Sampling (PETS) implement the principle of Thompson Sampling within a Model Predictive Control (MPC) framework \cite{chua2018deep}. Instead of averaging the predictions of the ensemble models (which would collapse the uncertainty), Trajectory Sampling maintains specific particles during imaginary rollouts. For each particle, a specific ensemble member (a hypothesis of the world dynamics) is sampled and fixed for the duration of the rollout. This process mimics sampling a transition function $\hat{T} \sim P(T|\mathcal{D})$ in classical BRL, allowing the agent to perform consistent reasoning under uncertainty. By selecting actions that maximize expected returns over these sampled trajectories, the agent naturally balances exploration and exploitation, consistent with the theoretical guarantees of Bayesian RL.
\subsection{PSPO (Our Method)}
Following established paradigms, PSPO employs an ensemble of probabilistic neural networks to model transition dynamics, maintaining a belief distribution that rigorously quantifies epistemic uncertainty. Crucially, our approach leverages the Principle of Minimum Information to derive a posterior conditioned on empirical evidence, explicitly circumventing the restrictive fixed posterior assumption. For rollout generation, PSPO samples models directly from this adaptive belief, aligning with classic Bayesian RL to facilitate controlled exploration. Building on this, we formulate a policy optimization framework that decomposes the constrained objective into a sequence of tractable subproblems, providing theoretical guarantees for monotonic improvement. Collectively, PSPO effectively strikes a balance between generalization capability and OOD robustness.
\section{Other prerequisite knowledge}
\subsection{Regularized Optimization}
We need to solve the following regularized optimization problem to find the optimal policy $\pi$:

\begin{equation*}
	\begin{aligned}
		&\min_{\pi} \left\{ \sum_{a} \pi(a|s) q(s,a) - \lambda \mathcal{H}(\pi(a|s)) \right\},
		\\& \mathrm{s.t.}\quad \sum_{a} \pi(a|s) = 1, \\
		&\quad\quad\pi(a|s) \geq 0,
	\end{aligned}
\end{equation*}
where the Shannon entropy $\mathcal{H}(\pi(a|s)) = -\sum_a \pi(a|s) \log(\pi(a|s))$. Substituting this into the objective function, the original problem is equivalent to:

\begin{equation*}
	\min_{\pi} \left\{ \sum_{a} \pi(a|s) q(s,a)+ \lambda  \pi(a|s) \log(\pi(a|s)) \right\},
\end{equation*}

The Lagrangian function $\mathcal{L}(\tilde{\alpha}, z, \tilde{\mu})$ is constructed as follows:
\begin{equation*}
	\begin{aligned}
		\mathcal{L}(\pi, z, \mu) &= \sum_{a} \pi(a|s) q(s,a)  + \lambda \sum_a \pi(a|s) \log(\pi(a|s) ) \\&
		- z \left( \sum_a \pi(a|s) - 1 \right) - \sum_a \mu \pi(a|s).
	\end{aligned}
\end{equation*}

Here, $z$ is the Lagrange multiplier corresponding to the equality constraint $\sum_a\pi(a|s) = 1$, and $\mu$ are the Lagrange multipliers corresponding to the inequality constraints $\pi(a|s) \geq 0$.

Due to the presence of the entropy term $\log(\pi(a|s))$, the optimal solution must lie in the interior of the probability simplex, i.e., $\pi(a|s) > 0$. According to the complementary slackness condition $\mu \pi(a|s) = 0$, when $\pi(a|s) > 0$, we must have $\mu = 0$. Under this condition, the stationarity condition simplifies to:
\begin{equation*}
	q(s,a) + \lambda (\log(\pi(a|s)) + 1) - z = 0.
\end{equation*}

The solution $\pi^*(a|s)$ is determined by solving the equation:
\begin{align*}
	\lambda \log(\pi(a|s)) &= z - \lambda - q(s,a), \\
	\log(\pi(a|s)) &= \frac{z - \lambda}{\lambda} - \frac{q(s,a)}{\lambda}, \\
	\pi(a|s) &= \exp\left(\frac{z - \lambda}{\lambda} - \frac{q(s,a)}{\lambda}\right) \\
	&= \exp\left(\frac{z - \lambda}{\lambda}\right) \cdot \exp\left(-\frac{q(a,s)}{\lambda}\right).
\end{align*}

Since $\exp\left(\frac{z - \lambda}{\lambda}\right)$ is a constant that does not depend on the index $i$, we can conclude that:
\begin{equation*}
	\pi^*(a|s) \propto \exp\left(-\frac{1}{\lambda} q(s,a)\right).
\end{equation*}
Let $\pi^*(a|s) = C \cdot \exp\left(-\frac{1}{\lambda} q(s,a)\right)$, then:
\begin{equation*}
	C = \frac{1}{\sum_a \exp\left(-\frac{1}{\lambda} q(s,a)\right)}.
\end{equation*}

Substituting the constant $C$ back into the expression yields the final form of the optimal solution:
\begin{equation*}
	\pi^*(a|s) = \frac{\exp\left(-\frac{1}{\lambda} q(s,a)\right)}{\sum_a \exp\left(-\frac{1}{\lambda} q(s,a)\right)}.
\end{equation*}
By substituting the optimal solution $\pi^*(a|s) \propto \exp(-\frac{1}{\lambda} q(s,a))$ into the primal problem, the resulting optimal value is given by:
\begin{equation*}
	-\lambda\log(\sum_a\exp(-\frac{q(s,a)}{\lambda})).
\end{equation*}

\section{Notations}
Objective function:
\begin{equation}
	\mathcal{J}(\pi)=\mathop {\mathbb E}_{\substack{\rho_0, \pi\\
			{T_0\sim P(T|\mathcal{D})}}} \left[\mathop {\mathbb E}\limits_{\substack{T_0,\pi \\ T_1\sim P(T|\mathcal{D})}}\left[\cdots \mathop {\mathbb E}_{T_{\infty}, \pi}\big[\sum_{t=0}^\infty\gamma^tr(s_t,a_t)\big]\right]\right].
\end{equation}

Regularized objective function:
\begin{equation}
	\widetilde{\mathcal{J}}(\pi)=\mathop {\mathbb E}_{\substack{\rho_0, \pi\\
			{T_0\sim P(T|\mathcal{D})}}} \left[\mathop {\mathbb E}\limits_{\substack{T_0,\pi \\ T_1\sim P(T|\mathcal{D})}}\left[\cdots \mathop {\mathbb E}_{T_{\infty}, \pi}\big[\sum_{t=0}^\infty\gamma^t\big(r(s_t,a_t)-\alpha D_\text{KL}(\pi(\cdot|s_t)||\mu(\cdot|s_t))\big)\big]\right]\right].
\end{equation}

Posterior sampling-based Bellman operator:
\begin{equation}
	\label{appendix:evaluation_opterator}
	(\bar{\mathcal{B}}^\pi Q)(s, a) = r(s, a) + \gamma \mathop{\mathbb{E}}_{\substack{T\sim P(T|E) \\ s' \sim T,a' \sim \pi}} \left[ Q(s', a')\right].
\end{equation}
Posterior sampling-based Bellman optimality operator:
\begin{equation}
	\label{appendix:opt_eq}
	\begin{aligned}
		(\bar{\mathcal{B}}^*Q)(s,a)=r(s,a) + \gamma \mathop{\mathbb{E}}_{\substack{T\sim P(T|\mathcal{D})\\ s'\sim T}}\bigg[\alpha\log\mathbb{E}_\mu\exp\big(\frac{Q(s',a')}{\alpha}\big)\bigg].
	\end{aligned}
\end{equation}
Constrained optimization problem:
\begin{equation}
	\label{optim:prob}
	\begin{aligned}
		&\pi_{i+1}=\arg\max_\pi\widetilde{\mathcal{J}}(\pi),
		\\&\textit{s.t.}\quad D_\text{KL}(\pi||\pi_i)\le\epsilon.
	\end{aligned}
\end{equation}
\section{Proofs}
\label{appendix:proof}
\begin{proposition}
	\label{appendix:proposition1}
	The posterior sampling-based Bellman operator $\bar{\mathcal{B}}^\pi$ is a $\gamma$-contraction with respect to the $L_\infty$-norm.
\end{proposition}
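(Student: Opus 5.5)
The plan is the standard contraction argument for a policy-evaluation Bellman operator, after first collapsing the nested expectation over models and next states into a single averaged transition kernel. Fix a state--action pair $(s,a)$ and write
\begin{equation*}
\bar{T}(s'|s,a)=\int_T T(s'|s,a)\,P(T|E)\,dT,
\end{equation*}
the Bayesian predictive distribution from the preliminaries. Because $P(T|E)$ is a probability measure and each $T(\cdot|s,a)$ lies in $\Delta(\mathcal{S})$, $\bar{T}(\cdot|s,a)$ is again a probability distribution over $\mathcal{S}$. By Fubini (the integrand is bounded once $Q$ is bounded), $(\bar{\mathcal{B}}^\pi Q)(s,a)=r(s,a)+\gamma\,\mathbb{E}_{s'\sim\bar{T}(\cdot|s,a),\,a'\sim\pi(\cdot|s')}[Q(s',a')]$. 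So $\bar{\mathcal{B}}^\pi$ is an ordinary policy-evaluation Bellman operator for a single Markov kernel, and the posterior only enters through this mixture.

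Next, take two bounded functions $Q_1,Q_2:\mathcal{S}\times\mathcal{A}\to\mathbb{R}$. The reward term cancels, so
\begin{equation*}
\bigl|(\bar{\mathcal{B}}^\pi Q_1)(s,a)-(\bar{\mathcal{B}}^\pi Q_2)(s,a)\bigr|=\gamma\,\Bigl|\mathbb{E}_{T\sim P(T|E),\,s'\sim T,\,a'\sim\pi}\bigl[Q_1(s',a')-Q_2(s',a')\bigr]\Bigr|.
\end{equation*}
Applying Jensen's inequality (or simply $|\mathbb{E}X|\le\mathbb{E}|X|$) and bounding the integrand pointwise by $\|Q_1-Q_2\|_\infty$, the right-hand side is at most $\gamma\|Q_1-Q_2\|_\infty$, since the nested expectations integrate a constant against probability measures. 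Taking the supremum over $(s,a)$ gives $\|\bar{\mathcal{B}}^\pi Q_1-\bar{\mathcal{B}}^\pi Q_2\|_\infty\le\gamma\|Q_1-Q_2\|_\infty$.

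The only step needing care is the measure-theoretic one: showing that the expectation over $T\sim P(T|E)$ is a genuine averaging and does not depend on $Q$. The posterior $P(T|E)\propto P(T)\exp(-\beta\mathcal{F}(T))$ depends on the critic through $\mathcal{F}$, so I will treat $P(T|E)$ as fixed while the operator is applied, consistent with the E-step/M-step view in the EM remark. If the weights were allowed to vary with the argument $Q$, the difference above would pick up an extra term and the simple bound would fail. In the practical ensemble case the posterior is a finite weighted sum with nonnegative weights summing to one, so measurability and Fubini are trivial. In the continuous case I would assume measurability of $T\mapsto T(\cdot|s,a)$ and restrict to bounded $Q$. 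The bound $|r|\le R_{\max}$ then ensures $\bar{\mathcal{B}}^\pi$ maps bounded functions to bounded functions, so the contraction holds on the Banach space of bounded functions and Banach's fixed-point theorem gives the unique fixed point used in Remark~\ref{remark1}.
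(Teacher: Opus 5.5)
Your proof is correct and follows the paper's argument essentially line by line. The paper cancels the reward, pulls out $\gamma$, uses $|\mathbb{E}X|\le\mathbb{E}|X|$, and bounds the integrand by $\|Q_1-Q_2\|_\infty$, exactly as you do. Your extra steps — collapsing to the predictive kernel $\bar{T}$, and stating explicitly that $P(T|E)$ is held fixed even though $\mathcal{F}$ depends on the critic — are not in the paper's proof, but its argument silently relies on the second of these as well.
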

\begin{proof}[proof of Proposition~\ref{appendix:proposition1}]	
	Let $Q_1$ and $Q_2$ be two arbitrary Q-functions. Then, considering the operator defined in Eq.~(\ref{appendix:evaluation_opterator}):
	
	\begin{equation*}
		\begin{aligned}
			\| \bar{\mathcal{B}}^\pi Q_1 &- \bar{\mathcal{B}}^\pi Q_2 \|_\infty \\
			&= \max_{s, a} \left| (\bar{\mathcal{B}}^\pi Q_1)(s, a) - (\bar{\mathcal{B}}^\pi Q_2)(s, a) \right| \\
			&= \max_{s, a} \left| \left( r(s, a) + \gamma \mathop{\mathbb{E}}_{\substack{T\sim P(T|E) \\ s' \sim T,a' \sim \pi}} \left[ Q_1(s', a')\right] \right) - \left( r(s, a) + \gamma \mathop{\mathbb{E}}_{\substack{T\sim P(T|E) \\ s' \sim T,a' \sim \pi}} \left[ Q_2(s', a')\right] \right) \right| \\
			&= \gamma \max_{s, a} \left| \mathop{\mathbb{E}}_{\substack{T\sim P(T|E) \\ s' \sim T,a' \sim \pi}} \left[ Q_1(s', a') - Q_2(s', a') \right] \right| \\
			&\le \gamma \max_{s, a} \mathop{\mathbb{E}}_{\substack{T\sim P(T|E) \\ s' \sim T,a' \sim \pi}} \left[ \left| Q_1(s', a') - Q_2(s', a') \right| \right] \\
			&\le \gamma \max_{s, a} \mathop{\mathbb{E}}_{\substack{T\sim P(T|E) \\ s' \sim T,a' \sim \pi}} \left[ \| Q_1 - Q_2 \|_\infty \right] \\
			&= \gamma \| Q_1 - Q_2 \|_\infty.
		\end{aligned}
	\end{equation*}
	Thus, $\bar{\mathcal{B}}^\pi$ is a $\gamma$-contraction with respect to the $L_\infty$-norm. 
\end{proof}
\begin{proposition}
	\label{appendix:thm:var}
	Assuming the reward is bounded such that $|r| \le R_{\max}$, and 
	let $Y_t(s,a)$ be the stochastic target computed at update $t$: $Y_t = r(s, a) + \gamma \mathbb{E}_{s' \sim T'(\cdot|s,a)} [V_t(s')]$ where $T' \sim P(T|\mathcal{D})$ and $V_t(s') = \mathbb{E}_{a' \sim \pi_t}[Q_t(s', a')]$. Then, the variance of $Y_t$ satisfies  $\text{Var}(Y_t) \le \frac{R^2_{\max}}{(1-\gamma)^2}$.
\end{proposition}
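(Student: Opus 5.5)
The plan is to show that $Y_t$ is almost surely confined to an interval of length $2R_{\max}/(1-\gamma)$ and then convert that range bound into a variance bound. The only randomness in $Y_t$ comes from the draw $T' \sim P(T|\mathcal{D})$; conditional on $(s,a)$ and the current iterate $(Q_t,\pi_t)$, the quantities $r(s,a)$, $Q_t$ and $\pi_t$ are fixed. So $Y_t = g(T')$ with $g(T) = r(s,a) + \gamma \mathbb{E}_{s'\sim T(\cdot|s,a)}[V_t(s')]$, and it suffices to bound the range of $g$ over all admissible models $T$.

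\textbf{Step 1: uniform bound on the iterates.} I would first establish $\|Q_t\|_\infty \le R_{\max}/(1-\gamma)$ for all $t$, assuming $Q_0$ is initialized in this ball. The induction runs as follows.

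If $\|Q_t\|_\infty \le R_{\max}/(1-\gamma)$, then $|V_t(s')| \le R_{\max}/(1-\gamma)$, because $V_t$ is an average of $Q_t$ over $\pi_t$.

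Hence for every model $T$,
\begin{equation*}
|g(T)| \le R_{\max} + \gamma \frac{R_{\max}}{1-\gamma} = \frac{R_{\max}}{1-\gamma}.
\end{equation*}

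The update in Eq.~\ref{thm:policy_eval} with $\eta_t \in [0,1]$ is a convex combination of $Q_t$ and $Y_t$, so the ball is preserved.

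The same bound is consistent with Proposition~\ref{proposition1}: the fixed point of $\bar{\mathcal{B}}^\pi$ lies in this ball, since $\bar{\mathcal{B}}^\pi$ maps the ball into itself.

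\textbf{Step 2: from range to variance.} A random variable taking values in $[m,M]$ has variance at most $(M-m)^2/4$ (Popoviciu's inequality). Here $m=-R_{\max}/(1-\gamma)$ and $M=R_{\max}/(1-\gamma)$, so $(M-m)^2/4 = R_{\max}^2/(1-\gamma)^2$, which is exactly the claimed bound. If I wanted to avoid citing Popoviciu, I would use the cruder estimate $\mathrm{Var}(Y_t) \le \mathbb{E}[Y_t^2] \le \sup|Y_t|^2$, which gives the same constant here because the interval is symmetric about $0$.

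\textbf{Main obstacle.} The substantive step is Step 1. The proposition as stated does not assume the iterates $Q_t$ are bounded, so this hypothesis has to be supplied. I would do so either by assuming the initialization $\|Q_0\|_\infty \le R_{\max}/(1-\gamma)$ together with step sizes $\eta_t\in[0,1]$, or by clipping targets to this range, which is standard. I would state this assumption explicitly before carrying out the induction.
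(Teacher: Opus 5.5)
Your proposal is correct and matches the paper's proof: bound $Y_t$ in $[-R_{\max}/(1-\gamma), R_{\max}/(1-\gamma)]$, then apply Popoviciu's inequality. The one difference is that the paper simply assumes $|Q_t|, |V_t| \le R_{\max}/(1-\gamma)$ as a standard assumption, whereas you derive it by induction from a bounded initialization and step sizes $\eta_t \in [0,1]$, which makes explicit a hypothesis that the statement leaves unstated.
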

\begin{proof}[proof of Proposition~\ref{appendix:thm:var}]	
	We start with the standard assumptions: $|r| \le R_{\max}$ and $|V_t(s)|\le \frac{R_{\max}}{1-\gamma}, |Q_t(s,a)| \le \frac{R_{\max}}{1-\gamma}$.
	The maximum value of $Y_t$ is bounded by:$$\max(Y_t) \le R_{\max} + \gamma \left(\frac{R_{\max}}{1-\gamma}\right) = \frac{R_{\max}(1-\gamma) + \gamma R_{\max}}{1-\gamma} = \frac{R_{\max}}{1-\gamma}.$$The minimum value of $Y_t$ is bounded by:$$\min(Y_t) \ge -R_{\max} + \gamma \left(-\frac{R_{\max}}{1-\gamma}\right) = \frac{-R_{\max}(1-\gamma) - \gamma R_{\max}}{1-\gamma} = -\frac{R_{\max}}{1-\gamma}.$$
	Since all possible realizations of the random variable $Y_t$  are contained within the interval $[-\frac{R_{\max}}{1-\gamma}, \frac{R_{\max}}{1-\gamma}]$, we can apply Popoviciu's inequality for bounded random variables, which states that $\text{Var}(X) \le \frac{(b-a)^2}{4}$ for $X \in [a, b]$. The variance of $Y_t$ is then bounded as:$$\text{Var}(Y_t) \le \frac{1}{4} \left(\frac{R_{\max}}{1-\gamma} - \left(-\frac{R_{\max}}{1-\gamma}\right)\right)^2 = \frac{1}{4} \left(\frac{2R_{\max}}{1-\gamma}\right)^2 = \frac{4 R^2_{\max}}{(1-\gamma)^2 \cdot 4} = \frac{R^2_{\max}}{(1-\gamma)^2}.$$Thus, the variance of the stochastic update target $Y_t$ is uniformly bounded.
\end{proof}
\begin{lemma}
	\label{lemma1}
	Let $V_Q(s)=\alpha\log\mathbb{E}_{a'\sim\mu}\exp\left(\frac{Q(s,a')}{\alpha}\right)$. For any Q-functions $Q_1$ and $Q_2$, the following inequality holds:
	$$\Vert V_{Q_1} - V_{Q_2} \Vert_\infty \le \Vert Q_1 - Q_2 \Vert_\infty .$$
\end{lemma}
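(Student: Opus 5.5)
The plan is to prove the bound pointwise in $s$, since the $L_\infty$ statement then follows by taking a supremum over $s$. Fix $s$ and set $c=\Vert Q_1-Q_2\Vert_\infty$; assume $c<\infty$, as otherwise there is nothing to prove. The argument uses two elementary properties of the map $Q(s,\cdot)\mapsto \alpha\log\mathbb{E}_{a'\sim\mu}\exp(Q(s,a')/\alpha)$, namely monotonicity and translation equivariance.

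\textbf{Monotonicity.} If $Q(s,a')\le Q'(s,a')$ for all $a'$, then $V_Q(s)\le V_{Q'}(s)$. This holds because $\exp(\cdot/\alpha)$ is increasing for $\alpha>0$, expectation under $\mu$ preserves order, and $\alpha\log(\cdot)$ is increasing.

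\textbf{Translation equivariance.} For any constant $c$,
\[
\alpha\log\mathbb{E}_{\mu}\exp\!\left(\frac{Q(s,a')+c}{\alpha}\right)=\alpha\log\!\left(e^{c/\alpha}\,\mathbb{E}_{\mu}\exp\!\left(\frac{Q(s,a')}{\alpha}\right)\right)=V_Q(s)+c .
\]

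\textbf{Combining the two.} Since $Q_1(s,a')\le Q_2(s,a')+c$ for every $a'$, applying monotonicity and then equivariance gives $V_{Q_1}(s)\le V_{Q_2}(s)+c$. Exchanging the roles of $Q_1$ and $Q_2$ gives $V_{Q_2}(s)\le V_{Q_1}(s)+c$. Together these yield $|V_{Q_1}(s)-V_{Q_2}(s)|\le c$, and taking the supremum over $s$ proves the lemma.

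\textbf{Main obstacle.} The only delicate point is well-definedness: the expectations $\mathbb{E}_\mu\exp(Q/\alpha)$ must be finite and positive so that the logarithm makes sense. I will assume bounded $Q$, consistent with the bounds $|Q|\le R_{\max}/(1-\gamma)$ used in the proof of Proposition~\ref{appendix:thm:var}. Under that assumption no further difficulty arises, and no mean-value or softmax-gradient argument is needed.
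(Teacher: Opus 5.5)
Your proposal is correct and follows essentially the same route as the paper: with $d=\Vert Q_1-Q_2\Vert_\infty$, the paper also bounds $Q_1\le Q_2+d$, pushes this through the increasing map $\alpha\log\mathbb{E}_\mu\exp(\cdot/\alpha)$, factors out $e^{d/\alpha}$ to get $V_{Q_1}(s)\le V_{Q_2}(s)+d$, and then symmetrizes and takes the supremum over $s$. Your explicit well-definedness remark (bounded $Q$, $\alpha>0$) is a small addition that the paper leaves implicit.
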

\begin{proof}[proof of Lemma~\ref{lemma1}]
	Let $d=\Vert Q_1 - Q_2 \Vert_\infty$. Then, we have:
	$$Q_1(s,a) \le Q_2(s,a) + d,\quad\forall s,a.$$
	We then have $V_{Q_1}(s)-V_{Q_2}(s)\le d$. The proof follows.
	\begin{equation*}
		\begin{aligned} V_{Q_1}(s) &= \alpha\log\mathop{\mathbb{E}}_{a'\sim\mu}\exp\left(\frac{Q_1(s,a')}{\alpha}\right) \\ &\le \alpha\log\mathop{\mathbb{E}}_{a'\sim\mu}\exp\left(\frac{Q_2(s,a') + d}{\alpha}\right) \\ &= \alpha\log\mathop{\mathbb{E}}_{a'\sim\mu}\left[\exp\left(\frac{Q_2(s,a')}{\alpha}\right) \cdot \exp\left(\frac{d}{\alpha}\right)\right] \\ &= \alpha\log\left[\exp\left(\frac{d}{\alpha}\right) \cdot \mathop{\mathbb{E}}_{a'\sim\mu}\exp\left(\frac{Q_2(s,a')}{\alpha}\right)\right] \\ &= \alpha\left[ \log\left(\exp\left(\frac{d}{\alpha}\right)\right) + \log\mathop{\mathbb{E}}_{a'\sim\mu}\exp\left(\frac{Q_2(s,a')}{\alpha}\right) \right] \\ &= \alpha\left[ \frac{d}{\alpha} + \frac{1}{\alpha}V_{Q_2}(s) \right] \\ &= d + V_{Q_2}(s). \end{aligned}
	\end{equation*}
	Similarly, assuming $Q_2(s,a) \le Q_1(s,a) + d$, we can likewise obtain $V_{Q_2}(s) - V_{Q_1}(s) \le d$. Combining these two inequalities implies that $\vert V_{Q_1}(s) - V_{Q_2}(s) \vert \le d = \Vert Q_1 - Q_2 \Vert_\infty$. Since this inequality holds for all $s$, taking the supremum over $s$ yields $\Vert V_{Q_1} - V_{Q_2} \Vert_\infty \le \Vert Q_1 - Q_2 \Vert_\infty$.
\end{proof}
\begin{theorem}
	\label{appendix:thm:operator}
	Starting from any function $Q: \mathcal{S} \times \mathcal{A} \to \mathbb{R}$ and iteratively applying posterior distribution-based Bellman optimality operator $\bar{\mathcal{B}}^*$, the resulting sequence converges to $\bar{Q}^*$. The optimal policy for the objective $\widetilde{\mathcal{J}}(\pi)$ is then obtained from this fixed point as:$\pi^*(a|s) \propto \mu(a|s) \exp \left( \frac{1}{\alpha} \bar{Q}^*(s, a) \right).$
\end{theorem}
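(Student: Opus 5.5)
The plan has two parts: first show that $\bar{\mathcal{B}}^*$ is a $\gamma$-contraction in the $L_\infty$-norm and invoke Banach's fixed-point theorem, then identify the Gibbs policy at the fixed point via the regularized optimization of Appendix B.

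\textbf{Step 1: contraction.} Take two Q-functions $Q_1,Q_2$ and use the soft value $V_Q$ of Lemma~\ref{lemma1}. For every $(s,a)$,
\begin{equation*}
|(\bar{\mathcal{B}}^*Q_1)(s,a)-(\bar{\mathcal{B}}^*Q_2)(s,a)| = \gamma\left|\mathbb{E}_{T\sim P(T|E),\,s'\sim T}\left[V_{Q_1}(s')-V_{Q_2}(s')\right]\right|.
\end{equation*}
Move the absolute value inside the expectation over the posterior-sampled model and $s'$, exactly as in the proof of Proposition~\ref{appendix:proposition1}. This bounds the right-hand side by $\gamma\Vert V_{Q_1}-V_{Q_2}\Vert_\infty$. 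Lemma~\ref{lemma1} then gives the bound $\gamma\Vert Q_1-Q_2\Vert_\infty$.

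\textbf{Step 2: convergence.} The space of bounded functions on $\mathcal{S}\times\mathcal{A}$ under the sup norm is complete, and the operator maps bounded functions to bounded functions because $r$ is bounded. Banach's theorem therefore gives a unique fixed point $\bar{Q}^*$, and the iterates converge to it geometrically from any bounded starting $Q$. For a truly arbitrary starting $Q$, the iterates are bounded after one step provided $V_Q$ is finite, which I will state as an assumption.

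\textbf{Step 3: the optimal policy.} Fix a state $s$ and consider the per-state problem
\begin{equation*}
\max_{\pi(\cdot|s)}\; \mathbb{E}_{a\sim\pi}[\bar{Q}^*(s,a)]-\alpha D_{\mathrm{KL}}(\pi(\cdot|s)\Vert\mu(\cdot|s)).
\end{equation*}
Apply the Lagrangian/KKT computation of Appendix B, with the entropy replaced by the KL to $\mu$; the only change is that $\log\pi$ becomes $\log(\pi/\mu)$. This yields the maximizer $\pi^*(a|s)\propto\mu(a|s)\exp(\bar{Q}^*(s,a)/\alpha)$, with optimal value $\alpha\log\mathbb{E}_\mu\exp(\bar{Q}^*(s,\cdot)/\alpha)=V_{\bar{Q}^*}(s)$. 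The same conclusion also follows from the Gibbs variational principle.

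Consequently, the fixed-point equation for $\bar{Q}^*$ coincides with the Bellman optimality equation for the KL-regularized return $\widetilde{\mathcal{J}}$ under posterior-sampled dynamics, since that objective redraws a model from the posterior at every step. To conclude, compare with any policy $\pi$. By the variational inequality, the regularized evaluation operator of $\pi$, namely $r+\gamma\mathbb{E}[\mathbb{E}_\pi Q-\alpha D_{\mathrm{KL}}(\pi\Vert\mu)]$, is dominated pointwise by $\bar{\mathcal{B}}^*$. Both operators are monotone contractions, so $\widetilde{Q}^\pi\le\bar{Q}^*$. Equality holds for $\pi^*$, which gives $\widetilde{\mathcal{J}}(\pi^*)\ge\widetilde{\mathcal{J}}(\pi)$.

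\textbf{Main obstacle.} I expect the difficulty to lie in Step 3, not in the contraction. The nested objective $\widetilde{\mathcal{J}}$ must be written as $\mathbb{E}_{\rho_0}[V(s_0)]$ for a Markov value function. This requires that the posterior $P(T|E)$ be held fixed during evaluation, as in the M-step, and not depend on the current $Q$ through $\mathcal{F}(T)$. I would state this explicitly as a standing assumption. With it, the problem reduces to an ordinary MDP with the mixture kernel $\bar{T}(s'|s,a)=\mathbb{E}_{T\sim P}[T(s'|s,a)]$, and standard soft/KL-regularized MDP arguments apply.
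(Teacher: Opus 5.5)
Your proposal is correct and follows essentially the same route as the paper: a $\gamma$-contraction of $\bar{\mathcal{B}}^*$ via Lemma~\ref{lemma1} and Banach's theorem, then the per-state Lagrangian for the KL-regularized problem at the fixed point, giving $\pi^*(a|s)\propto\mu(a|s)\exp(\bar{Q}^*(s,a)/\alpha)$. Two parts of your proposal go beyond the paper, which simply asserts that the argmax at $\bar{Q}^*$ is optimal for $\widetilde{\mathcal{J}}$: the comparison step ($\widetilde{Q}^\pi\le\bar{Q}^*$ via monotone contractions, with equality for $\pi^*$), and the fixed-posterior assumption, which is needed already for the contraction in Step 1 and not only in Step 3. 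The only slip is in Step 2, where the first iterate is bounded if $V_Q$ is bounded, not merely finite.
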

\begin{proof}[proof of Theorem~\ref{appendix:thm:operator}]
	For any Q-functions $Q_1$ and $Q_2$, we have:
	$$\Vert \bar{\mathcal{B}}^*Q_1 - \bar{\mathcal{B}}^*Q_2 \Vert_\infty = \sup_{s,a} \left| (\bar{\mathcal{B}}^*Q_1)(s,a) - (\bar{\mathcal{B}}^*Q_2)(s,a) \right|.$$
	
	Let $V_Q(s)=\alpha\log\mathbb{E}_{a'\sim\mu}\exp\left(\frac{Q(s,a')}{\alpha}\right)$. Then
	\begin{equation*}
		\begin{aligned}
			\left| (\bar{\mathcal{B}}^*Q_1)(s,a) - (\bar{\mathcal{B}}^*Q_2)(s,a) \right| &= \left| \left(r(s,a) + \gamma \mathbb{E}_{T, s'}[V_{Q_1}(s')]\right) - \left(r(s,a) + \gamma \mathbb{E}_{T, s'}[V_{Q_2}(s')]\right) \right| \\
			&= \left| \gamma \left( \mathbb{E}_{T, s'}[V_{Q_1}(s')] - \mathbb{E}_{T, s'}[V_{Q_2}(s')] \right) \right| \\
			&= \gamma \left| \mathbb{E}_{T, s'} \left[ V_{Q_1}(s') - V_{Q_2}(s') \right] \right| \\
			&\le \gamma \mathbb{E}_{T, s'} \left[ \left| V_{Q_1}(s') - V_{Q_2}(s') \right| \right] .
		\end{aligned}
	\end{equation*}
	According to Lemma~\ref{lemma1}, we have
	\begin{equation*}
		\begin{aligned}
			\left| (\bar{\mathcal{B}}^*Q_1)(s,a) - (\bar{\mathcal{B}}^*Q_2)(s,a) \right| &\le \gamma \mathbb{E}_{T, s'} \left[ \Vert Q_1 - Q_2 \Vert_\infty \right] \\
			&= \gamma \Vert Q_1 - Q_2 \Vert_\infty.
		\end{aligned}
	\end{equation*}
	This implies that
	$$\Vert \bar{\mathcal{B}}^*Q_1 - \bar{\mathcal{B}}^*Q_2 \Vert_\infty \le \gamma \Vert Q_1 - Q_2 \Vert_\infty.$$
	
	By the Banach Fixed-Point Theorem, this operator admits a unique fixed point  $\bar{Q}^*$ in the complete metric space of bounded functions, and repeated application of the operator converges to $\bar{Q}^*$.
	
	According to definition of fixed point, we have:
	$$\bar{Q}^*(s,a) = (\bar{\mathcal{B}}^*\bar{Q}^*)(s,a) = r(s,a) + \gamma \mathbb{E}_{T, s'} \left[ V_{\bar{Q}^*}(s') \right].$$
	Specifically, $V_Q(s)$ is the optimal value of the following optimization problem:
	
	$$V_Q(s) = \max_{\pi(\cdot|s)} \left\{ \mathbb{E}_{a\sim\pi(\cdot|s)}[Q(s,a)] - \alpha D_{\text{KL}}(\pi(\cdot|s) \Vert \mu(\cdot|s)) \right\}.$$
	
	Here, $D_{\text{KL}}(\pi \Vert \mu) = \sum_a \pi(a|s) \log\left(\frac{\pi(a|s)}{\mu(a|s)}\right)$ denotes the Kullback-Leibler (KL) divergence.
	The optimal policy $\pi^*$, by definition, is the strategy that achieves the maximum of the objective at the fixed point $\bar{Q}^*$. Then. We seek to solve:
	$$\pi^*(\cdot|s) = \arg\max_{\pi} \left\{ \sum_a \pi(a|s) \bar{Q}^*(s,a) - \alpha D_{\text{KL}}(\pi(\cdot|s) || \mu(\cdot|s)) \right\}.$$
	This is a convex optimization problem subject to the constraint $\sum_a \pi(a|s) = 1$. We utilize the Lagrange multiplier method. Let $\mathcal{L}$ be the Lagrangian (omitting $s$ and $a$ dependence for clarity):
	$$\mathcal{L}(\pi, z) = \sum_a \pi(a) \bar{Q}^*(a) - \alpha \sum_a \pi(a) (\log \pi(a) - \log \mu(a)) - z \left( \sum_a \pi(a) - 1 \right).$$
	We take the derivative with respect to $\pi(a)$ and set it to zero:
	$$\frac{\partial \mathcal{L}}{\partial \pi(a)} = \bar{Q}^*(a) - \alpha \left( (\log \pi(a) + 1) - \log \mu(a) \right) - z = 0.$$
	Rearranging the equation to solve for $\pi(a)$ yields:
	$$\alpha \log \pi(a) = \bar{Q}^*(a) + \alpha \log \mu(a) - (\alpha + z).$$
	Exponentiating both sides and simplifying:$$\pi(a) = \exp\left( \frac{1}{\alpha}\bar{Q}^*(a) + \log \mu(a) - \frac{\alpha+z}{\alpha} \right).$$
	This further simplifies to:
	$$\pi(a) = \mu(a) \exp\left( \frac{\bar{Q}^*(a)}{\alpha} \right) \cdot C(z),$$
	where $C(z) = \exp\left( - \frac{\alpha+z}{\alpha} \right)$ is a term independent of $a$ that serves as the normalization constant. Consequently, the optimal policy takes the following unnormalized form:
	$$\pi^*(a|s) \propto \mu(a|s) \exp \left( \frac{1}{\alpha} \bar{Q}^*(s, a) \right).$$
\end{proof}
\begin{theorem}
	\label{appendix:thm:improve}
	Let $C_{\text{KL}}(\pi)=\mathcal{J}(\pi) - \widetilde{\mathcal{J}}(\pi)$ denote the regularization term, and let $F=F(\pi_i)$ be the Fisher Information Matrix at $\pi_i$. If $\epsilon$ is sufficiently small and the condition $\Vert\nabla\mathcal{J}(\pi_i)\Vert^2_F > \langle  \nabla \mathcal{J}(\pi_i),\nabla C_\text{KL}(\pi_i) \rangle_F$ holds, then starting from an arbitrary policy $\pi_0$, the sequence of policies $\{\pi_i\}_{i\ge0}$ generated by iteratively solving Eq.~(\ref{optim:prob}) guarantees monotonic improvement of $\mathcal{J}(\pi)$, such that $\mathcal{J}(\pi_{i+1}) \ge \mathcal{J}(\pi_i)$.
\end{theorem}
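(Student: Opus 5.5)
We argue locally in parameter space, in the style of natural-gradient and trust-region analyses. Parameterize the policy as $\pi_\theta$ with $\pi_i=\pi_{\theta_i}$. For small $\epsilon$ we replace the subproblem in Eq.~(\ref{optim:prob}) by its standard local approximation. The objective becomes its first-order Taylor expansion $\widetilde{\mathcal{J}}(\pi_\theta)\approx\widetilde{\mathcal{J}}(\pi_i)+\nabla\widetilde{\mathcal{J}}(\pi_i)^\top(\theta-\theta_i)$. The constraint becomes its second-order expansion $D_{\mathrm{KL}}(\pi_\theta\Vert\pi_i)\approx\tfrac12(\theta-\theta_i)^\top F(\theta-\theta_i)$, using that the KL has zero value and zero gradient at $\theta_i$ and Hessian $F$. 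This linear objective with a quadratic constraint has the closed-form maximizer
\[
\theta_{i+1}-\theta_i=\sqrt{\frac{2\epsilon}{\widetilde g^\top F^{-1}\widetilde g}}\,F^{-1}\widetilde g,\qquad \widetilde g:=\nabla\widetilde{\mathcal{J}}(\pi_i).
\]
Both expansions carry remainders of order $O(\Vert\theta-\theta_i\Vert^2)=O(\epsilon)$. The step itself has size $O(\sqrt\epsilon)$, so the exact solution of Eq.~(\ref{optim:prob}) differs from this natural-gradient step only in higher-order terms.

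Next we apply the step to the original objective. Since $\widetilde{\mathcal{J}}=\mathcal{J}-C_{\mathrm{KL}}$, we have $\widetilde g=g-c$ with $g=\nabla\mathcal{J}(\pi_i)$ and $c=\nabla C_{\mathrm{KL}}(\pi_i)$. A first-order expansion of $\mathcal{J}$ at $\theta_i$ then gives
\[
\mathcal{J}(\pi_{i+1})-\mathcal{J}(\pi_i)=\sqrt{\frac{2\epsilon}{\widetilde g^\top F^{-1}\widetilde g}}\;g^\top F^{-1}(g-c)+O(\epsilon).
\]
Here we read $\langle x,y\rangle_F$ as $x^\top F^{-1}y$, which matches the natural-gradient geometry, and we will state this convention explicitly. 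The leading coefficient is then $\Vert g\Vert_F^2-\langle g,c\rangle_F$, which is strictly positive by hypothesis. The leading term therefore has order $\sqrt\epsilon$ and dominates the $O(\epsilon)$ remainder once $\epsilon$ is small enough, which gives $\mathcal{J}(\pi_{i+1})\ge\mathcal{J}(\pi_i)$. If the hypothesis holds at every iterate, induction on $i$ gives monotonicity of the whole sequence from any $\pi_0$. Degenerate iterates with $\widetilde g=0$ give $\pi_{i+1}=\pi_i$ and equality, which we treat separately.

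The main obstacle is controlling the remainders uniformly. This requires $\mathcal{J}$, $C_{\mathrm{KL}}$ and the KL to be $C^2$ near $\theta_i$, and $F$ to be positive definite, so that $\widetilde g^\top F^{-1}\widetilde g>0$ and the step is well defined. It also requires the exact trust-region solution to stay within $O(\epsilon)$ of the linearized one. Boundedness of rewards, and hence of all value functions by $R_{\max}/(1-\gamma)$ as in Proposition~\ref{appendix:thm:var}, together with smoothness of the policy class gives these bounds. We will state them as regularity assumptions and choose $\epsilon$ so that the $\sqrt{\epsilon}$ term exceeds the Lipschitz-type remainder. The improvement we obtain is the first-order, sufficiently-small-$\epsilon$ guarantee that the theorem's hypotheses are designed for.
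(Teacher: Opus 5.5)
Your argument is correct and is essentially the paper's. The paper obtains the same direction $v\approx(\lambda^*)^{-1}F^{-1}\nabla\widetilde{\mathcal{J}}(\pi_i)$ from the KKT stationarity condition and a Taylor expansion of the KL, then checks $\nabla\mathcal{J}(\pi_i)^\top v\ge 0$ under the same $F^{-1}$-inner-product convention. Your closed-form step (whose length is exactly $(\lambda^*)^{-1}$) and your explicit $\sqrt{\epsilon}$-versus-$O(\epsilon)$ remainder comparison add the rigor that the paper's purely first-order argument leaves out.

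One small correction: the hypothesis $\langle g, g-c\rangle_F>0$ already forces $\widetilde g\neq 0$, so no degenerate case needs separate treatment. Your claim that $\widetilde g=0$ gives $\pi_{i+1}=\pi_i$ would in fact fail at a saddle point of $\widetilde{\mathcal{J}}$.
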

\begin{proof}[proof of Theorem~\ref{appendix:thm:improve}]
	
	According to definition:
	\begin{equation*}
		C_\text{KL}(\pi)=\mathop {\mathbb E}_{\substack{\rho_0, \pi\\
				{T_0\sim P(T|\mathcal{D})}}} \left[\mathop {\mathbb E}\limits_{\substack{T_0,\pi \\ T_1\sim P(T|\mathcal{D})}}\left[\cdots \mathop {\mathbb E}_{T_{\infty}, \pi}\big[\sum_{t=0}^\infty\alpha D_\text{KL}(\pi(\cdot|s_t)||\mu(\cdot|s_t)) \big]\right]\right],
	\end{equation*}
	where $\mu(\cdot|s_t)$ denotes the behavior policy used to collect the offline dataset.
	We consider the Lagrangian function $L(\pi, \lambda)$:
	\begin{equation*}
		L(\pi, \lambda) = \widetilde{\mathcal{J}}(\pi) - \lambda (D_\text{KL}(\pi||\pi_i) - \epsilon),
	\end{equation*}
	where $\lambda \ge 0$ is the Lagrange multiplier. At the optimal solution $\pi_{i+1}$, the stationarity condition must hold:
	\begin{equation*}
		\nabla \widetilde{\mathcal{J}}(\pi_{i+1}) = \lambda^* \nabla D_\text{KL}(\pi_{i+1}||\pi_i).
	\end{equation*}
	When $\epsilon$ is sufficiently small, we can Taylor-expand the KL divergence  $D_\text{KL}(\pi||\pi_i)$ around $\pi=\pi_i$:
	\begin{equation*}
		D_\text{KL}(\pi||\pi_i) \approx D_\text{KL}(\pi_i||\pi_i) + \nabla D_\text{KL}|_{\pi_i} (\pi - \pi_i) + \frac{1}{2}(\pi - \pi_i)^T \nabla^2 D_\text{KL}|_{\pi_i} (\pi - \pi_i),
	\end{equation*}
	where $\nabla^2 D_\text{KL}|_{\pi_i} = F(\pi_i)$ is the Fisher Information Matrix. Consequently, the gradient of the KL divergence at $\pi_{i+1}$ can be approximated as:
	\begin{equation*}
		\nabla D_\text{KL}(\pi_{i+1}||\pi_i) \approx F(\pi_i) (\pi_{i+1} - \pi_i) = F(\pi_i) v,
	\end{equation*}
	where $v=\pi_{i+1} - \pi_i$ denote the update direction. By the continuity of the gradient (i.e., $\nabla \widetilde{\mathcal{J}}(\pi_{i+1}) \approx \nabla \widetilde{\mathcal{J}}(\pi_i)$), we obtain:
	\begin{equation*}
		v \approx (\lambda^*)^{-1} F(\pi_i)^{-1} \nabla \widetilde{\mathcal{J}}(\pi_i).
	\end{equation*}
	We need to prove that:
	\begin{equation*}
		\nabla\mathcal{J}(\pi_i)^T v \ge 0.
	\end{equation*}
	We have
	\begin{equation*}
		\begin{aligned}
			\nabla\mathcal{J}(\pi_i)^T v&= (\lambda^*)^{-1} \nabla \mathcal{J}(\pi_i)^T F(\pi_i)^{-1}(\nabla \mathcal{J}(\pi_i)- \nabla C_\text{KL}(\pi_i))
			\\&=(\lambda^*)^{-1}\Vert\nabla\mathcal{J}(\pi_i)\Vert^2_F-\langle  \nabla \mathcal{J}(\pi_i),\nabla C_\text{KL}(\pi_i) \rangle_F
			\\&\ge 0.
		\end{aligned}
	\end{equation*}
\end{proof}

\section{Experimental Details}
\label{appendix:experiment}
\subsection{Dataset}
\subsubsection{D4RL}
To comprehensively assess performance across diverse offline scenarios, our experimental assessment encompasses a comprehensive benchmark spanning eighteen distinct experimental settings. These domains arise from the combination of three continuous control tasks (hopper, walker2d, and halfcheetah) with six diverse offline datasets that vary in quality and collection strategy. 

\begin{itemize} \item \textbf{Random:} A dataset generated by a randomly initialized policy. This setting represents the most challenging scenario with low-quality data and high entropy, testing the algorithm's ability to extract meaningful learning signals from noisy interactions without any expert guidance.
	
	\item \textbf{Expert:} A dataset collected by a fully converged Soft Actor-Critic (SAC) agent. This dataset consists of high-return trajectories but covers a relatively narrow region of the state-action space. It evaluates the algorithm's capacity for imitation and its ability to avoid overfitting to a deterministic behavior policy.
	
	\item \textbf{Medium:} A dataset generated by a policy trained to approximately one-third of the expert's performance. This represents a competent but suboptimal baseline, challenging the algorithm to extrapolate beyond the demonstrated performance to achieve optimality.
	
	\item \textbf{Medium-Expert:} A composite dataset constructed by mixing equal parts (50-50 split) of medium and expert trajectories. This setting introduces a multimodal data distribution with mixed quality, requiring the algorithm to effectively distinguish between optimal and suboptimal transitions and selectively leverage high-reward data while ignoring inferior samples.
	
	\item \textbf{Medium-Replay:} A dataset comprising the entire replay buffer recorded during the training of the medium policy. Unlike the fixed medium dataset, this captures the full evolution of the agent from random initialization to intermediate competence, offering high state-space coverage but significant variance in transition quality.
	
	\item \textbf{Full-Replay:} This dataset encompasses the complete spectrum of policy evolution, ranging from random exploration to convergence. It offers the highest information density but also presents a challenge due to the extensive diversity and scale of the data.
\end{itemize}

\subsubsection{Optimal Liquidation}
\label{appendix:liquidation}
This benchmark is an adaptation of the Optimal Liquidation Problem for offline RL, adhering to the protocol established by \cite{rigter2023one}. The primary objective is to maximize the realized revenue obtained from liquidating an initial inventory of 100 units of a specific asset (Currency A) into a target asset (Currency B) within a finite time horizon $T$, governed by stochastically evolving market dynamics. At every time step, the agent serves as an execution algorithm that must determine the optimal liquidation schedule, specifically deciding the proportion of the remaining inventory to execute.

This formulation presents a fundamental sequential decision-making challenge characterized by the tension between market risk and potential return. At each decision point, the agent must strategically balance the incentive to delay execution, which offers the possibility of capturing favorable price movements, against the substantial risk of adverse market shifts that could significantly erode the asset's value. Consequently, the optimal policy must transcend simple heuristics to navigate a complex landscape of price uncertainty. It requires distinguishing between conservative strategies, such as immediate liquidation or uniform execution (akin to Time-Weighted Average Price strategies) which prioritize variance reduction, and aggressive strategies that hold inventory in anticipation of price recovery despite the exposure to heightened volatility.

\paragraph{Formulation:} The liquidation environment is mathematically formulated as a finite-horizon Markov Decision Process. The state space is characterized by a three-dimensional vector $s_t=(t, m_t, p_t)$, encompassing the current discrete time step $t \in \{0, 1, \dots, T\}$, the remaining inventory of Currency A denoted by $m_t \in [0, 100]$, and the instantaneous exchange rate $p_t \in [0, \infty)$. The episode initializes with an exchange rate sampled from a Gaussian distribution $p_0 \sim \mathcal{N}(1, 0.05^2)$. The action space is continuous and one-dimensional, defined as $\mathcal{A} = [-1, 1]$. The interpretation of an action $a_t \in \mathcal{A}$ is twofold: a positive value $a_t \in (0, 1]$ corresponds to the decision to convert a proportion $a_t$ of the current inventory $m_t$ into Currency B; conversely, a non-positive action $a_t \in [-1, 0]$ signifies a ``hold" decision where no currency is converted at the current step. The objective is to maximize the cumulative reward, where the reward function is defined as the amount of Currency B obtained at each step. The system dynamics are primarily driven by the stochastic evolution of the exchange rate. To realistically capture mean-reverting market properties, $p_t$ is modeled using an Ornstein-Uhlenbeck (OU) process governed by the following stochastic differential equation:
\begin{equation*}
	dp_t = \theta(\mu - p_t)dt + \sigma dW_t,
\end{equation*}
where $W_t$ represents a standard Wiener process describing Brownian motion. In the specific implementation, the process parameters are configured as follows: the rate of mean reversion is set to $\theta = 0.05$, the long-term equilibrium price is $\mu = 1.5$, and the volatility coefficient is $\sigma = 0.2$.
\paragraph{Dataset} The dataset is generated by a random behavioral policy. This policy adopts a mixture strategy at each time step. Specifically, with a probability of $0.8$, the policy executes a non-conversion action (effectively sampling a negative action), opting to hold the current inventory. Conversely, with a probability of $0.2$, it selects a conversion action. In this latter case, the specific proportion of currency to be converted is determined by sampling uniformly from the feasible positive action space, thereby introducing stochasticity.

\subsubsection{Reference Performance}
The normalized score benchmarks an algorithm's performance on a standardized scale. On this scale, a random policy is set to 0 and an expert policy is set to 100. A score greater than 100 thus signifies that the policy learned from the offline dataset outperforms the online-trained expert policy. The score is computed as follows:

\begin{equation}
	\mathrm{score}_{\mathrm{algo}}=100\times\frac{\mathrm{performance}_{\mathrm{algo}}-\mathrm{performance}_{\mathrm{expert}}}   {\mathrm{performance}_{\mathrm{expert}}-\mathrm{performance}_{\mathrm{random}}}.
\end{equation}
The reference performance is reported in Table~\ref{appendix:tab1}.
\begin{table}[h]
	\centering
	\caption{Reference performance across different environments.}
	\label{appendix:tab1}
	\begin{tabular}{c c c}
		\toprule
		\textbf{Environment} & \textbf{Random Policy} & \textbf{Expert Policy} \\
		\midrule
		Halfcheetah & -280.18 & 12135.0 \\
		Hopper & -20.27 & 3234.3 \\
		Walker2d & 1.63 & 4592.3 \\
		Offline optimal liquidation & 0.0 & 135.0 \\
		\bottomrule
	\end{tabular}
\end{table}
\subsection{Hyperparameters} Table~\ref{tab:hyperparameters} presents the detailed hyperparameter configurations. For the simpler hopper environment, dynamics models with layer and batch sizes of 256 units are sufficient. However, the increased complexity of walker2d and halfcheetah requires larger architectures with 512-unit layers to ensure convergence.
\begin{table}[h]
	\centering
	\caption{Hyperparameters of PSPO.}
	\label{tab:hyperparameters}
	\begin{tabular}{l|c}
		\toprule
		\textbf{Parameter} & \textbf{Value} \\
		\midrule
		Dynamics model learning rate & $10^{-4}$ \\
		Policy learning rate & $3 \cdot 10^{-5}$ \\
		Critic (Q-value) learning rate & $3 \cdot 10^{-4}$ \\
		discounted factor ($\gamma$) & 0.99 \\
		Size of the model ensemble ($N$) & 10 \\
		Likelihood Coefficient ($\beta$) & 1.0 \\
		Layer size of policy & 256 \\
		Layer size of dynamics model & 512 \\
		Batch size for dynamics model learning & 512 \\
		Batch size for policy learning & 256 \\
		Dynamics model training epochs & 1000 \\
		\bottomrule
	\end{tabular}
\end{table}
\subsection{Dynamics Model} 
\label{appendix:exp_dynamics_model}
Following standard model-based RL protocols, we parameterize the transition dynamics using neural networks. Each network takes a state-action pair as input and outputs a Gaussian distribution over the next state and reward, characterized by a predicted mean and diagonal covariance. To effectively capture epistemic uncertainty, we construct a large candidate pool of 100 models, each trained independently via Maximum Likelihood Estimation (MLE) on the offline dataset. During inference, we randomly sample an active ensemble of $N$ models from this pre-trained pool for trajectory prediction.
\subsection{Belief Distribution}
We construct a dynamics model ensemble and maintain a belief distribution over it. While the prior distribution $P(T)$ over model ensemble can be initialized in various ways to encode specific environmental prior knowledge, practical deployment environments often lack exhaustive initial specifications. It is common practice to initialize the prior as a uniform distribution, which inherently assigns low confidence to OOD regions \cite{chua2018deep}.
\subsection{Compute Infrastructure}
All computational experiments and benchmarks reported in this study were executed on a high-performance computing workstation designed for large-scale deep learning tasks. The hardware configuration includes: \begin{itemize} 
	\item GPU Acceleration: A cluster of 8 $\times$ NVIDIA GeForce RTX 3090 Ti GPUs, utilized for parallelized ensemble training and policy optimization.
	\item Processing Unit: An Intel(R) Xeon(R) Platinum 8383C CPU @ 2.70GHz, ensuring efficient data preprocessing and environment simulation. 
	\item Memory: 256 GB of system RAM to accommodate large-scale offline datasets (e.g., D4RL).. 
\end{itemize} 
The software environment consists of Ubuntu 20.04 LTS.
\begin{table}[htbp]
	\centering
	\caption{Computational overhead.}
	\label{appendix:tab_overhead}
	\begin{tabular}{lcc}
		\hline
		\textbf{Method} & \textbf{Train (ms/iteration)} & \textbf{Inference (ms/iteration)} \\
		\hline
		PSPO (Model-based)		  & 112.078$\pm$0.619 & 3.038$\pm$0.042 \\
		PMDB (Model-based)         & 80.853$\pm$0.408  & 2.921$\pm$0.031 \\
		ADM (Model-based)          & 26.365$\pm$0.532  & 9.162$\pm$0.215 \\
		DMG (Model-free)           & 10.618$\pm$0.173  & 1.080$\pm$0.030 \\
		TD3+BC (Model-free)       & 7.265$\pm$0.140   & 1.124$\pm$0.003 \\
		\hline
	\end{tabular}
\end{table}
\section{Discussion}
From the perspective of policy optimization, PSPO shares structural parallels with Trust Region Policy Optimization (TRPO) \cite{pmlr-v37-schulman15}. Given the intractability of directly optimizing the original objective, both algorithms resort to constructing a surrogate objective and maximizing it within a localized trust region to ensure stable policy updates. Nevertheless, a key divergence emerges regarding the transition dynamics. Traditional TRPO implicitly assumes a constant dynamics model $T$ (often represented by sample trajectories). In contrast, PSPO elevates the dynamics to a random variable. By optimizing over a distribution of models $P(T|E)$ rather than a single point estimate, PSPO naturally accounts for epistemic uncertainty, providing a more robust theoretical foundation for policy improvement in non-stationary or data-constrained offline settings.

From the perspective of uncertainty quantification, the proposed framework occupies a unique position at the intersection of Bayesian RL and Risk-averse RL \cite{rigter2022rambo, rigter2023one,yang2026proactive,li2026tw}. While Bayesian RL typically addresses epistemic uncertainty by treating transition dynamics as a random variable inferred through posterior beliefs, Risk-averse RL focuses on aleatoric uncertainty to mitigate return stochasticity via risk measures such as CVaR. Unlike standard Bayesian RL which optimizes for expected performance across the belief space, PSPO establishes a foundation for risk-sensitive decision-making. Specifically, the information-theoretic regularization in the E-step serves as a robust prior to mitigate model overfitting, while the guaranteed monotonic improvement in the M-step ensures optimization stability. This mechanism enables the agent to reduce model-induced bias while implicitly safeguarding against distribution shifts. Consequently, PSPO establishes a rigorous foundation for risk-averse policy optimization in safety-critical offline domains. These theoretical properties are empirically validated in Figure~\ref{exp_fig2:uncertainty}, which confirms the risk-averse nature of the learned policy.
\section{Limitation}
One limitation of PSPO lies in the computational overhead during training, as generating rollouts requires forward inference across the entire model ensemble. We evaluate the computational costs of recent model-based and model-free methods on identical hardware, with results summarized in Table~\ref{appendix:tab_overhead}. In general, model-based approaches demand significantly higher training resources than model-free counterparts because of the repetitive ensemble forward passes. PSPO introduces additional overhead beyond standard model-based methods as it requires computing posterior distributions following these forward passes. However, this increased cost is restricted to the training phase. During inference, PSPO remains efficient and comparable to other methods because only the policy network is involved.
\section{Broader Impact}
This work contributes to the field of offline reinforcement learning by providing a framework that improves generalization and robustness in data-driven decision-making. The potential broader impacts include enhancing the efficiency and safety of autonomous systems in domains such as robotics, industrial control, and resource management, where online exploration is often costly. By enabling more reliable policy optimization from static datasets, this research may lead to more sustainable and accessible deployment of AI-driven solutions. There are no specific issues requiring further disclosure.

%\section*{Acknowledgments}
%\label{sec:ack}
%We thank xxx for their contributions and support for the project.

\end{document}